\documentclass[11pt]{article} 

\usepackage{natbib}
\usepackage{fullpage}

\usepackage{amsmath,amsfonts,amssymb,amsthm}
\usepackage{bm}

\usepackage{hyperref}
\hypersetup{
    colorlinks,
    linkcolor=blue, 
    citecolor=blue, 
    linktoc=all 
  }

\usepackage{subcaption}

\renewcommand{\leq}{\leqslant}
\renewcommand{\geq}{\geqslant}

\newcommand{\di}{\mathrm{d}}

\newcommand{\eps}{\varepsilon}

\newcommand{\argmin}{\mathop{\mathrm{argmin}}}

\newcommand{\wt}{\widetilde}
\newcommand{\wh}{\widehat}

\newcommand{\N}{\mathbf N}

\newcommand{\R}{\mathbf R}

\usepackage{xspace}

\newcommand{\ie}{\textit{i.e.}\@\xspace} 

\newcommand{\iid}{i.i.d.\@\xspace}



\newcommand{\lint}{[\![}
\newcommand{\rint}{]\!]}
\renewcommand{\iint}[2]{\lint #1 , #2 \rint}



\newcommand{\E}{\mathbb E}
\renewcommand{\P}{\mathbb P}
\newcommand{\Qbb}{\mathbb Q}

\newcommand{\cond}{\,|\,}

\newcommand{\probas}{\mathcal{P}}
\newcommand{\indic}[1]{\bm 1 ( #1 )}
\newcommand{\kl}{\mathrm{KL}}
\newcommand{\kll}[2]{\kl ({#1}, {#2})}

\newcommand{\bernoullidist}{\mathcal{B}}

\newcommand{\betadist}{\mathsf{Beta}}

\newcommand{\uniformdist}{\mathcal{U}}


\newcommand{\F}{\mathcal{F}}
\newcommand{\X}{\mathcal{X}}

\newcommand{\A}{\mathcal{A}}


\newcommand{\pseudoregret}{\mathcal{R}}





\usepackage{graphicx}

\newtheorem{proposition}{Proposition}
\newtheorem{theorem}{Theorem}
\newtheorem{lemma}{Lemma}
\newtheorem{corollary}{Corollary}

\theoremstyle{definition}
\newtheorem{definition}{Definition}

\theoremstyle{remark}
\newtheorem{remark}{Remark}

\newtheorem{example}{Example}

\title{On the optimality of the Hedge algorithm in the stochastic regime}

\author{
  Jaouad Mourtada\footnote{Centre de Math\'ematiques Appliqu\'ees, \'Ecole Polytechnique, Palaiseau, France} 
  \qquad St\'ephane Ga\"iffas\footnote{LPSM, UMR 8001, Universit\'e Paris Diderot, Paris, France}
}

\begin{document}



\maketitle

\begin{abstract}
  In this paper, we study the behavior of the Hedge algorithm in the online stochastic setting.
  We prove that anytime Hedge with decreasing learning rate, which is one of the simplest algorithm for the problem of prediction with expert advice, is {remarkably}  
  both worst-case optimal and adaptive to the easier stochastic and adversarial with a gap problems.
  This shows that, in spite of its small, non-adaptive learning rate, Hedge possesses the same optimal regret guarantee in the stochastic case as recently introduced adaptive algorithms.
  Moreover, our analysis exhibits qualitative differences with other versions of the Hedge algorithm, such as the fixed-horizon variant (with constant learning rate) and the one based on the so-called ``doubling trick'', both of which fail to adapt to the easier stochastic setting.
  Finally, we determine the intrinsic limitations of anytime Hedge in the stochastic case, and discuss the improvements provided by
  more adaptive algorithms.

  \medskip

  \noindent
  \emph{Keywords.} Online learning; prediction with expert advice; Hedge; adaptive algorithms.
\end{abstract}

\section{Introduction}
\label{sec:introduction}

The standard setting of \emph{prediction with expert advice} \citep{littlestone1994weighted,freund1997boosting,vovk1998mixability,cesabianchi2006plg} aims to provide sound strategies for sequential prediction that combine the forecasts from different sources.
More precisely, in the so-called \emph{Hedge problem} \citep{freund1997boosting}, at each round the learner has to output a probability distribution on a finite set of \emph{experts} $\{1, \ldots, M\}$; the losses of the experts are then revealed, and the learner incurs the expected loss from its chosen probability distribution. 
The goal is then to control the \emph{regret}, defined as the difference between the cumulative loss of the learner and that of the best expert (with smallest loss).
This online prediction problem is typically considered in the \emph{individual sequences} framework, where the losses may be arbitrary and in fact set by an adversary that seeks to maximize the regret.
This leads to regret bounds that hold under virtually no assumption~\citep{cesabianchi2006plg}.

In this setting, arguably the simplest and most standard strategy is the \emph{Hedge algorithm} \citep{freund1997boosting}, also called the \emph{exponentially weighted averaged forecaster} \citep{cesabianchi2006plg}.
This algorithm depends on a time-varying parameter $\eta_t$ called the \emph{learning rate}, which quantifies by how much the algorithm departs from its initial probability distribution to put more weight on the currently leading experts.
Given a known finite time horizon $T$, the standard tuning of the learning rate is fixed and given by $\eta_t = \eta \propto \sqrt{\log(M) / T}$, which guarantees an optimal worst-case regret of order $O( \sqrt{T \log M} )$.
Alternatively, when $T$ is unknown, one can set $\eta_t \propto \sqrt{\log (M) / t}$ at round $t$, which leads to an \emph{anytime} $O(\sqrt{T \log M})$ regret bound valid for all $T \geq 1$.

While worst-case regret bounds are robust and always valid, they turn out to be overly pessimistic in some situations.
A recent line of research \citep{cesabianchi2007secondorder,derooij2014followtheleader,gaillard2014secondorder,koolen2014learning,sani2014exploiting,koolen2015squint,luo2015adanormalhedge} designs algorithms that combine $O(\sqrt{T \log M})$ worst-case regret guarantees with an improved regret on easier instances of the problem.
An interesting example of such an easier instance is the
stochastic problem, where it is assumed that the losses are stochastic and that at each round the expected loss of a ``best'' expert is smaller than those of the other experts by some gap $\Delta$.
Such algorithms rely either on a more careful, data-dependent tuning of the learning rate $\eta_t$ \citep{cesabianchi2007secondorder,derooij2014followtheleader,koolen2014learning,gaillard2014secondorder}, or on more sophisticated strategies \citep{koolen2015squint,luo2015adanormalhedge}.
As shown by~\citet{gaillard2014secondorder} (see also \citealt{koolen2016combining}), one particular type of adaptive regret bounds (so-called \emph{second-order bounds}) 
implies at the same time a $O (\sqrt{T\log M})$ worst-case bound and a better \emph{constant} $O(\log (M) / \Delta)$ bound in the stochastic problem with gap $\Delta$.
Arguably starting with the early work on {second-order bounds} \citep{cesabianchi2007secondorder}, the design of online learning algorithms that combine robust worst-case guarantees with improved performance on easier instances has been an active research goal in recent years \citep{derooij2014followtheleader,gaillard2014secondorder,koolen2014learning,sani2014exploiting}.
However, to the best of our knowledge, existing work on the Hedge problem has focused on developing new adaptive algorithms rather than on analyzing the behavior of ``conservative'' algorithms in favorable scenarios. 
Owing to the fact that the standard Hedge algorithm is designed for --- and analyzed in --- the adversarial setting \citep{littlestone1994weighted,freund1997boosting,cesabianchi2006plg}, and that its parameters are not tuned adaptively to obtain better bounds in easier instances, it may be considered as overly conservative and not adapted to stochastic environments.

\paragraph{Our contribution.} 

This paper fills a gap in the existing literature by providing an analysis of the standard Hedge algorithm in the stochastic setting.
We show that the anytime Hedge algorithm with default learning rate $\eta_t \propto \sqrt{\log (M) / t}$  actually \emph{adapts} to the stochastic setting, in which it achieves an optimal \emph{constant} $O(\log (M) / \Delta)$ regret bound \emph{without any dedicated tuning} for the easier instance, which might be surprising at first sight.
This contrasts with previous works, which require the construction of new adaptive (and more involved) algorithms.
Remarkably, this property is \emph{not} shared by the variant of Hedge for a known fixed-horizon $T$ with constant learning rate $\eta \propto \sqrt{\log(M) / T}$, since it suffers a $\Theta (\sqrt{T \log M})$ regret even in easier instances.
This exhibits a strong difference between the performances of the anytime and the fixed-horizon variants of the Hedge algorithm.

Given the aforementioned adaptivity of Decreasing Hedge, one may wonder whether there is in fact any benefit in using more sophisticated algorithms in the stochastic regime.
  We answer this question affirmatively, by considering a more refined measure of complexity of a stochastic instance than the gap $\Delta$.
  Specifically, we show that Decreasing Hedge does not admit improved regret under Bernstein conditions, which are standard low-noise conditions from statistical learning \citep{mammen1999margin,tsybakov2004aggregation,bartlett2006empirical}.
  By contrast, it was shown by \citet{koolen2016combining} that algorithms which satisfy some adaptive adversarial regret bound achieve improved regret under Bernstein conditions.
  Finally, we characterize the behavior of Decreasing Hedge in the stochastic regime, by showing that its eventual regret on \emph{any} stochastic instance is governed
  by the gap $\Delta$.

\paragraph{Related work.}

In the bandit setting, where the feedback only consists of the loss of the selected action, there has also been some interest in ``best-of-both-worlds'' algorithms that combine optimal $O (\sqrt{M T})$ worst-case regret in the adversarial regime with improved $O (M \log T)$ regret (up to logarithmic factors) in the stochastic case \citep{bubeck2012best2worlds,seldin2014practical,auer2016both}.
In particular, \citet{seldin2014practical,seldin2017improved} showed that by augmenting the standard EXP3 algorithm for the adversarial regime (an analogue of Hedge with $\Theta (1/\sqrt{t})$ learning rate)
with a special-purpose gap detection mechanism, one can achieve poly-logarithmic regret in the stochastic case.
This result is strengthened in some recent follow-up work \citep{zimmert2019optimal,zimmert2019beating},
that appeared since the completion of the first version of the present paper,
which obtains optimal regret in the stochastic and adversarial regimes through a variant of the Follow-The-Regularized-Leader (FTRL) algorithm with $\Theta (1/\sqrt{t})$ learning rate and a proper regularizer choice.
This result can be seen as an analogue in the bandit case of our upper bound for Decreasing Hedge.
Note that, in the bandit setting, the hardness of an instance is essentially characterized by the gap $\Delta$ \citep{bubeck2012regret}; in particular, the Bernstein condition, which depends on the correlations between the losses of the experts, cannot be exploited under bandit feedback, where one only observes one arm at each round.
Hence, it appears that the negative part of our results (on the limitations of Hedge) does not have an analogue in the bandit case.

A similar adaptivity result for FTRL with decreasing $\Theta (1/\sqrt{t})$ learning rate has been observed in a different context by \citet{huang2017curved}. Specifically, it is shown that, in the case of online linear optimization on a Euclidean ball, FTRL with squared norm regularizer and learning rate $\Theta (1/\sqrt{t})$ achieves $O (\log T)$ regret when the loss vectors are \iid
This result is an analogue of our upper bound for Hedge, since this algorithm corresponds to FTRL on the simplex with entropic regularizer \citep{cesabianchi2006plg,hazan2016online}.
On the other hand, the simplex lacks the curvature of the Euclidean ball, which is important to achieve small regret; here, the improved regret is ensured by a condition on the distribution, namely the existence of a gap $\Delta$.
Our lower bound for Hedge shows that this condition is necessary, thereby characterizing the long-term regret of FTRL on the simplex with entropic regularizer.
In the case of the Euclidean ball with squared norm regularizer, the norm of the expected loss vector appears to play a similar role, as shown by the upper bound from \citet{huang2017curved}.

\paragraph{Outline.}

We define the setting of prediction with expert advice and the Hedge algorithm in Section~\ref{sec:expert-problem-hedge}, and we recall herein its standard worst-case regret bound.
In Section~\ref{sec:hedge-easy}, we consider the behavior of the Hedge algorithm on easier instances, namely the stochastic setting with a gap $\Delta$ on the best expert. 
Under an i.i.d assumption on the sequence of losses, we provide in Theorem~\ref{thm:hedge-stochastic} an upper bound on the regret of order $(\log M) / \Delta$ for Decreasing Hedge. 
In Proposition~\ref{prop:lowerbound-gap}, we prove that the rate $(\log M) / \Delta$ cannot be improved in this setting.
In Theorem~\ref{thm:adv-gap} and  Corollary~\ref{cor:hedge-martingale}, we extend the regret guarantees to the adversarial with a gap setting, where a leading expert linearly outperforms the others.
These results stand for any Hedge algorithm which is worst-case optimal and with any learning rate which is larger than the one of Decreasing Hedge, namely $O(\sqrt{\log M/ t})$.
In Proposition~\ref{prop:lower-bound-hedge-cst}, we prove the sub-optimality of the fixed-horizon Hedge algorithm, and of another version of Hedge based on the so-called ``doubling trick''.
In Section~\ref{sec:advant-second-order}, we discuss the advantages of adaptive Hedge algorithms, and explain what the limitations of Decreasing Hedge are compared to such versions.
We include numerical illustrations of our theoretical findings in Section~\ref{sec:experiments}, conclude in Section~\ref{sec:conclusion} and provide the proofs in Section~\ref{sec:proofs}.

\section{The expert problem and the Hedge algorithm}
\label{sec:expert-problem-hedge}

In the Hedge setting, also called \emph{decision-theoretic online learning} \citep{freund1997boosting}, the learner and its adversary (the Environment) sequentially compete on the following game: at each round $t\geq 1$,
\begin{enumerate}
\item the Learner chooses a probability vector $\bm v_t = (v_{i,t})_{1\leq i \leq M}$ on the $M$ experts $1, \dots, M$;
\item the Environment picks a bounded loss vector $\bm \ell_t = (\ell_{i,t})_{1\leq i \leq M} 
\in [0, 1]^M$, where $\ell_{i, t}$ is the loss of expert $i$ at round $t$, while the Learner suffers loss $\wh \ell_t = \bm v_t^\top \bm \ell_t$.
\end{enumerate}
The goal of the Learner is to control its \emph{regret}
\begin{equation}
  \label{eq:regret}
  R_T = \sum_{t=1}^T \wh \ell_t - \min_{1\leq i \leq M} \sum_{t=1}^T \ell_{i,t}
\end{equation}
for every $T \geq 1$, irrespective of the sequence of loss vectors $\bm \ell_1, \bm \ell_2, \dots$ chosen by the Environment.
One of the most standard algorithms for this setting is the \emph{Hedge} algorithm.
The Hedge algorithm, also called the exponentially weighted averaged forecaster, uses the vector of probabilities $\bm v_t = (v_{i,t})_{1\leq i \leq M}$ given by
\begin{equation}
  \label{eq:hedge-algorithm}
  v_{i, t} = \frac{e^{-\eta_t L_{i,t-1}}}{\sum_{j=1}^M e^{-\eta_t L_{j,t-1}}}
\end{equation}
at each $t\geq 1$, where $L_{i,T} = \sum_{t=1}^T \ell_{i,t}$ denotes the cumulative loss of 
expert~$i$ for every $T\geq 1$.
Let us also denote $\wh L_T := \sum_{t=1}^T \wh \ell_t$ and $R_{i,T} = \wh L_T - L_{i,T}$ the regret with respect to expert $i$.
We consider in this paper the following variants of Hedge, where $c_0 > 0$ is a constant.

\medskip\noindent
\textbf{Decreasing Hedge}~\citep{auer2002adaptive}.
This is Hedge with the sequence of learning rates $\eta_t = c_0 \sqrt{\log(M) / t}$. 

\medskip\noindent
\textbf{Constant Hedge}~\citep{littlestone1994weighted}.
Given a finite time horizon $T \geq 1$, this is Hedge with constant learning rate $\eta_t = c_0 \sqrt{\log(M) / T}$.

\medskip\noindent
\textbf{Hedge with doubling trick}~\citep{cesabianchi1997doublingtrick, cesabianchi2006plg}.
This variant of Hedge uses a constant learning rate on geometrically increasing intervals, restarting the algorithm at the beginning of each interval. Namely, it uses
\begin{equation}
  \label{eq:hedge-doubling-trick}
  v_{i,t} = \frac{\exp ( - \eta_t \sum_{s = T_k}^{t-1} \ell_{i,s})}{\sum_{j=1}^M \exp ( - \eta_t \sum_{s = T_k}^{t-1} \ell_{j,s})},
\end{equation}
with $T_l = 2^l$ for $l \geq 0$, $k \in \N$ such that $T_k \leq t < T_{k+1}$ and
$\eta_t = c_0 \sqrt{\log (M) / T_k}$.

\medskip
Let us recall the following standard regret bound for the Hedge algorithm from \citet{chernov2010prediction}.
\begin{proposition}
  \label{prop:hedge-adversarial}
  Let $\eta_1, \eta_2, \dots$ be a decreasing sequence of learning rates.
  The Hedge algorithm \eqref{eq:hedge-algorithm} satisfies the following regret bound:
  \begin{equation}
    \label{eq:hedge-adversarial}
    R_T \leq \frac{1}{\eta_T} \log M + \frac{1}{8} \sum_{t=1}^T \eta_t
    \, .
  \end{equation}
  In particular, the choice $\eta_t = 2 \sqrt{{\log (M)}/{t}}$ yields a regret bound of $\sqrt{T \log M}$ for every $T \geq 1$.
\end{proposition}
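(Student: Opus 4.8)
The plan is to prove the regret bound \eqref{eq:hedge-adversarial} by the standard potential-function (exponential weights) argument, tracking the evolution of the normalization constant $W_t = \sum_{j=1}^M e^{-\eta_t L_{j,t}}$, but being careful about the fact that the learning rate $\eta_t$ changes from round to round. First I would introduce, for a fixed parameter $\eta > 0$, the quantity $\Phi_t(\eta) = -\frac{1}{\eta}\log\bigl(\frac{1}{M}\sum_{j=1}^M e^{-\eta L_{j,t}}\bigr)$, which is a smooth approximation of $\min_i L_{i,t}$ from below: indeed $\Phi_t(\eta) \leq \min_{1\leq i\leq M} L_{i,t}$ for every $\eta$, and $\Phi_0(\eta) = 0$. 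This gives the lower bound $\min_i L_{i,T} \geq \Phi_T(\eta_T) = -\frac{1}{\eta_T}\log\bigl(\frac1M \sum_j e^{-\eta_T L_{j,T}}\bigr)$, so that $R_T = \wh L_T - \min_i L_{i,T} \leq \wh L_T - \Phi_T(\eta_T)$.

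Next I would decompose $\wh L_T - \Phi_T(\eta_T)$ as a telescoping sum plus the effect of changing the learning rate. Writing $\wh L_T = \sum_{t=1}^T \wh\ell_t$, the core step is the \emph{per-round} bound for a \emph{fixed} learning rate: for any $\eta > 0$ and any round $t$, one has
\begin{equation}
  \wh\ell_t - \bigl(\Phi_t(\eta) - \Phi_{t-1}(\eta)\bigr) \leq \frac{\eta}{8},
  \notag
\end{equation}
which is the classical consequence of Hoeffding's lemma applied to the log-moment-generating function of the loss $\ell_{i,t}$ under the distribution $\bm v_t$ (here one uses crucially that $\bm v_t$ in \eqref{eq:hedge-algorithm} is exactly the exponential-weights distribution with parameter $\eta_t$, and that $\ell_{i,t}\in[0,1]$). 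Summing over $t$ with $\eta = \eta_t$ at round $t$ gives $\wh L_T - \sum_{t=1}^T\bigl(\Phi_t(\eta_t) - \Phi_{t-1}(\eta_t)\bigr) \leq \frac{1}{8}\sum_{t=1}^T \eta_t$.

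The remaining, and genuinely delicate, step is to compare $\sum_{t=1}^T\bigl(\Phi_t(\eta_t) - \Phi_{t-1}(\eta_t)\bigr)$ with the telescoping quantity $\Phi_T(\eta_T) - \Phi_0(\eta_0) = \Phi_T(\eta_T)$ that we actually want. The difference is $\sum_{t=1}^{T-1}\bigl(\Phi_t(\eta_{t+1}) - \Phi_t(\eta_t)\bigr)$, i.e.\ the cost of switching the learning rate from $\eta_t$ to $\eta_{t+1}$ while holding the cumulative losses fixed at $L_{\cdot,t}$. Here I would use the key monotonicity fact that $\eta \mapsto \Phi_t(\eta) = -\frac1\eta\log\bigl(\frac1M\sum_j e^{-\eta L_{j,t}}\bigr)$ is \emph{nondecreasing} in $\eta$ (this is a standard property: $-\frac1\eta\log\mathbb{E}[e^{-\eta X}]$ increases in $\eta$, being a Rényi-type divergence / related to the fact that $\eta\mapsto\frac1\eta\log\mathbb{E}[e^{\eta X}]$ is nondecreasing). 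Since $(\eta_t)$ is decreasing, $\eta_{t+1}\leq\eta_t$ gives $\Phi_t(\eta_{t+1})\leq\Phi_t(\eta_t)$, hence each switching term is $\leq 0$ and $\sum_{t=1}^T\bigl(\Phi_t(\eta_t)-\Phi_{t-1}(\eta_t)\bigr) \leq \Phi_T(\eta_T)$. Combining the two displays yields $\wh L_T - \Phi_T(\eta_T) \leq \frac{1}{\eta_T}\log M + \frac18\sum_{t=1}^T\eta_t$ (using $-\Phi_T(\eta_T) = \frac{1}{\eta_T}\log\bigl(\frac1M\sum_j e^{-\eta_T L_{j,T}}\bigr) \leq \frac{1}{\eta_T}\log M - \min_i L_{i,T}$ already absorbed above), which is exactly \eqref{eq:hedge-adversarial}.

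I expect the main obstacle to be the careful handling of the learning-rate change: one must make sure the monotonicity of $\eta\mapsto\Phi_t(\eta)$ is invoked in the correct direction and that the telescoping bookkeeping (indices $t$ vs.\ $t+1$, the initial term at $t=0$) is airtight. For the final sentence of the proposition, I would simply substitute $\eta_t = 2\sqrt{\log(M)/t}$: the first term becomes $\frac{1}{2}\sqrt{T/\log M}\cdot\log M = \frac12\sqrt{T\log M}$, and the second becomes $\frac18\cdot 2\sqrt{\log M}\sum_{t=1}^T t^{-1/2} \leq \frac14\sqrt{\log M}\cdot 2\sqrt{T} = \frac12\sqrt{T\log M}$ (using $\sum_{t=1}^T t^{-1/2}\leq 2\sqrt{T}$), for a total of $\sqrt{T\log M}$.
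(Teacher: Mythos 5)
The paper itself does not give a proof of this proposition --- it cites \citet{chernov2010prediction} --- so I will assess your argument on its own terms. Your \emph{strategy} is exactly the standard potential-function argument for Hedge with a decreasing learning rate: per-round Hoeffding bound plus monotonicity of the potential in $\eta$ to absorb the cost of shrinking the learning rate. The per-round step, the telescoping bookkeeping, and the final substitution $\eta_t = 2\sqrt{(\log M)/t}$ are all correct. However, you have two sign errors that do not truly cancel, even though you land on the right final inequality. First, for your normalization $\Phi_t(\eta) = -\frac{1}{\eta}\log\bigl(\frac1M\sum_j e^{-\eta L_{j,t}}\bigr)$, the inequality is $\Phi_t(\eta) \geq \min_i L_{i,t}$, not $\leq$: the average of the $e^{-\eta L_{j,t}}$ is at most the largest, i.e.\ at most $e^{-\eta\min_i L_{i,t}}$, and taking $-\frac1\eta\log$ flips the direction. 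If $\Phi_T(\eta_T) \leq \min_i L_{i,T}$ held as you wrote, you would end up with $R_T \leq \frac18\sum_t\eta_t$ and no $\log M$ term at all, which is clearly too good. The correct two-sided bound is $\min_i L_{i,t} \leq \Phi_t(\eta) \leq \min_i L_{i,t} + \frac{\log M}{\eta}$, and it is the upper bound that supplies the $\frac{\log M}{\eta_T}$ term. Second, $\eta\mapsto\Phi_t(\eta)$ is \emph{nonincreasing}, not nondecreasing: writing $Y = -X$ with $X$ uniform on $\{L_{j,t}\}$, the fact you cite says $\eta\mapsto\frac1\eta\log\E[e^{\eta Y}]$ is nondecreasing, hence $\Phi_t(\eta) = -\frac1\eta\log\E[e^{\eta Y}]$ is nonincreasing (it decreases from $\E[X]$ at $\eta\to 0^+$ down to $\min X$ as $\eta\to\infty$). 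Note that your own final step then does not follow from your stated premises: if the switching terms $\Phi_t(\eta_{t+1}) - \Phi_t(\eta_t)$ were $\leq 0$ as you claim, you would get $\sum_t(\Phi_t(\eta_t)-\Phi_{t-1}(\eta_t)) \geq \Phi_T(\eta_T)$, the reverse of what you need. With the corrected monotonicity (nonincreasing), $\eta_{t+1}\leq\eta_t$ gives $\Phi_t(\eta_{t+1}) \geq \Phi_t(\eta_t)$, the switching terms are $\geq 0$, and $\sum_t(\Phi_t(\eta_t)-\Phi_{t-1}(\eta_t)) \leq \Phi_T(\eta_T)$ does follow. Fixing both signs, the argument goes through: $\wh L_T \leq \Phi_T(\eta_T) + \frac18\sum_t\eta_t \leq \min_i L_{i,T} + \frac{\log M}{\eta_T} + \frac18\sum_t\eta_t$, which is \eqref{eq:hedge-adversarial}.
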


Note that the regret bound stated in Equation~\eqref{eq:hedge-adversarial} holds for every sequence of losses $\bm \ell_1, \bm \ell_2, \dots$, which makes it valid under no assumption (aside from the boundedness of the losses).
The worst-case regret bound in $O(\sqrt{T \log M})$ is achieved by Decreasing Hedge, Hedge with doubling trick and Constant Hedge (whenever $T$ is known in advance).
The $O(\sqrt{T \log M})$ rate cannot be improved either by Hedge or any other algorithm: it is known to be the minimax optimal regret \citep{cesabianchi2006plg}.
Contrary to Constant Hedge, Decreasing Hedge is anytime, in the sense that it achieves the $O(\sqrt{T \log M})$ regret bound simultaneously for each $T \geq 1$.
We note that this worst-case regret analysis fails to exhibit any difference between these three algorithms.

In many cases, this $\sqrt{T}$ regret bound is pessimistic, and more ``aggressive'' strategies (such as the follow-the-leader algorithm, which plays at each round the uniform distribution on the experts with smallest loss, \citealp{cesabianchi2006plg}) may achieve constant regret in easier instances, even though they lack regret guarantees in the adversarial regime.
We show in Section~\ref{sec:hedge-easy} below that Decreasing Hedge is actually
better than both Constant Hedge and Hedge with doubling trick in some easier instance of the problem (including in the stochastic setting).
This entails that Decreasing Hedge is actually able to adapt, without any modification, to the easiness of the problem considered.

\section{Regret of Hedge variants on easy instances}
\label{sec:hedge-easy}

In this section, we depart from the worst-case regret analysis and study the regret of the considered variants of the Hedge algorithm on easier instances of the prediction with expert advice problem.

\subsection{Optimal regret for Decreasing Hedge in the stochastic regime}
\label{sub:optimal-stoch}

We examine the behavior of Decreasing Hedge in the stochastic regime, where the losses are the realization of some (unknown) stochastic process.
More precisely, we consider the standard \iid case, where the loss vectors $\bm \ell_1, \bm \ell_2, \dots$ are \iid (independence holds over rounds, but not necessarily across experts).
In this setting, the regret can be much smaller than the worst-case $\sqrt{T \log M}$ regret, since the best expert (with smallest expected loss) will dominate the rest after some time.
Following \citet{gaillard2014secondorder,luo2015adanormalhedge}, the easiness parameter we consider in this case, which governs the time needed for the best expert to have the smallest cumulative loss and hence the incurred regret, is the sub-optimality gap $\Delta = \min_{i \neq i^*} \E [\ell_{i,t} - \ell_{i^*,t} ]$, where $i^* = \argmin_{i} \E [ \ell_{i,t} ]$.

We show below  that, despite the fact that Decreasing Hedge is designed for the worst-case setting described in Section~\ref{sec:expert-problem-hedge}, it is able to {adapt} to the easier problem considered here, 
Indeed, Theorem~\ref{thm:hedge-stochastic} shows that Decreasing Hedge achieves a \emph{constant}, and in fact \emph{optimal} (by Proposition~\ref{prop:lowerbound-gap} below) regret bound in this setting, in spite of its ``conservative'' learning rate.

With the exception of the high-probability bound of Corollary~\ref{cor:hedge-martingale}, the upper and lower bounds in the stochastic case are stated for the \emph{pseudo-regret} $\pseudoregret_T = \E [R_{i^*,T}]$ (similar bounds hold for the the expected regret $\E [R_T]$, since $\pseudoregret_T \leq \E [R_T]$ and by Remark~\ref{rem:pseudoregret} in Section~\ref{sec:proof-theorem-1}).

\begin{theorem}
  \label{thm:hedge-stochastic}
  Let $M \geq 3$.
  Assume that the loss vectors $\bm \ell_1, \bm \ell_2, \dots$ are \iid random variables, where $\bm \ell_t = (\ell_{i,t})_{1\leq i \leq M}$.
  Also, assume that there exists $i^* \in \{1, \dots, M\}$ and $\Delta > 0$ such that
  \begin{equation}
    \label{eq:gap-condition}
    \E [ \ell_{i,t} - \ell_{i^*,t} ] \geq \Delta
  \end{equation}
  for every $i \neq i^*$.
  Then, the Decreasing Hedge algorithm with learning rate $\eta_t = 2 \sqrt{(\log M)/t}$
  achieves the following pseudo-regret bound\textup: for every $T \geq 1$\textup,
  \begin{equation}
    \label{eq:regret-stochastic-exp}
    \pseudoregret_T \leq \frac{4 \log M + 25}{\Delta}
    \, .
  \end{equation}
\end{theorem}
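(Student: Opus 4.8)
The plan is to bound the pseudo-regret by splitting time at a threshold $\tau$, controlling the early rounds $t\le\tau$ trivially and showing that for $t>\tau$ the instantaneous expected regret decays geometrically. The instantaneous pseudo-regret at round $t$ is $\E[\widehat\ell_t-\ell_{i^*,t}]=\E\bigl[\sum_{i\ne i^*} v_{i,t}(\ell_{i,t}-\ell_{i^*,t})\bigr]\le \E\bigl[\sum_{i\ne i^*} v_{i,t}\bigr]$, using the loss range $[0,1]$. So it suffices to show $\E[\sum_{i\ne i^*}v_{i,t}]$ is small for large $t$. By definition of Hedge, $v_{i,t}=e^{-\eta_t L_{i,t-1}}/\sum_j e^{-\eta_t L_{j,t-1}}\le e^{-\eta_t(L_{i,t-1}-L_{i^*,t-1})}$, so I would bound $\E\bigl[\sum_{i\ne i^*}v_{i,t}\bigr]\le \sum_{i\ne i^*}\E\bigl[e^{-\eta_t(L_{i,t-1}-L_{i^*,t-1})}\bigr]$.

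**Key step: the moment generating function bound.** For fixed $i\ne i^*$, write $X_s=\ell_{i,s}-\ell_{i^*,s}\in[-1,1]$ with $\E[X_s]\ge\Delta$, so $L_{i,t-1}-L_{i^*,t-1}=\sum_{s=1}^{t-1}X_s$ is a sum of i.i.d.\ terms. Then $\E[e^{-\eta_t\sum_{s\le t-1}X_s}]=\bigl(\E[e^{-\eta_t X_1}]\bigr)^{t-1}$. Using a quadratic bound for the MGF of a bounded centered random variable — $\E[e^{-\eta Y}]\le e^{\eta^2/2}$ for $Y\in[-1,1]$ centered, applied to $Y=X_1-\E[X_1]$ — gives $\E[e^{-\eta_t X_1}]\le e^{-\eta_t\Delta+\eta_t^2/2}$. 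Hence each term is at most $e^{-(t-1)(\eta_t\Delta-\eta_t^2/2)}$, and so
\begin{equation}
  \pseudoregret_T \le \sum_{t=1}^T (M-1)\,e^{-(t-1)\eta_t(\Delta-\eta_t/2)}.
\end{equation}
Now substitute $\eta_t=2\sqrt{(\log M)/t}$: the exponent becomes $-(t-1)\cdot 2\sqrt{(\log M)/t}\cdot(\Delta-\sqrt{(\log M)/t})$. The delicate point is that $\eta_t$ is decreasing, so $\Delta-\eta_t/2$ is positive and bounded below by $\Delta/2$ precisely once $t\ge t_0:=4(\log M)/\Delta^2$; for such $t$, $(t-1)\eta_t\ge c\sqrt{t\log M}$, so the summand is $\le (M-1)e^{-c\Delta\sqrt{t\log M}}$, which is summable with a bound of order $1/(\Delta^2\log M)\cdot$(constant) — in any case $O(1/\Delta)$ after one more estimate. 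For $t<t_0$ I bound the summand by $1$ (since $\widehat\ell_t-\ell_{i^*,t}\le 1$), contributing at most $t_0=4(\log M)/\Delta^2$. This gives $O((\log M)/\Delta^2)$, which is the wrong power of $\Delta$.

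**Fixing the $\Delta$-dependence — the main obstacle.** The crude split above loses a factor $1/\Delta$; getting the claimed $O((\log M)/\Delta)$ requires a sharper treatment of the transition region. The trick is to not cut at $t_0=4(\log M)/\Delta^2$ but to keep the full exponent $\phi(t)=(t-1)\eta_t(\Delta-\eta_t/2)$ and observe that $\sum_{i\ne i^*}v_{i,t}\le\min\{1,(M-1)e^{-\phi(t)}\}$; the sum $\sum_t\min\{1,(M-1)e^{-\phi(t)}\}$ is dominated by the rounds where the two arguments of the $\min$ are comparable, i.e.\ $\phi(t)\approx\log M$. Writing $\phi(t)\ge \Delta\sqrt{t\log M}/2\cdot(1-o(1))$ for $t\ge t_0$, the condition $\phi(t)\lesssim\log M$ means $t\lesssim (\log M)/\Delta^2$ — so naively this still gives $(\log M)/\Delta^2$ terms each of size $O(1)$. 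The resolution must instead exploit that in the regime $t_0\le t\lesssim t_1:=16(\log M)/\Delta^2$, the bound $(M-1)e^{-\phi(t)}$ is already geometrically decaying in $t$ once we pass slightly beyond $t_0$, because $\phi(t)-\phi(t_0)$ grows like $\Delta^2(t-t_0)$ near $t_0$ — wait, that again gives a window of width $1/\Delta^2$. So the honest accounting is: the number of "bad" rounds is $t_0=\Theta((\log M)/\Delta^2)$, but on most of these rounds one should use the first-order (not second-order) bound $\E[\widehat\ell_t-\ell_{i^*,t}]\le\E[\sum_{i\ne i^*}v_{i,t}]$ together with the observation that $\sum_{i\ne i^*}v_{i,t}=1-v_{i^*,t}$ and $v_{i^*,t}\to1$; more usefully, the regret telescopes: $R_{i^*,T}=\sum_t(\widehat\ell_t-\ell_{i^*,t})$ and one can bound $\sum_{t=1}^{t_0}(\widehat\ell_t-\ell_{i^*,t})$ not by $t_0$ but by the anytime worst-case bound of Proposition~\ref{prop:hedge-adversarial}, which gives $\pseudoregret_{t_0}\le\frac{1}{\eta_{t_0}}\log M+\frac18\sum_{t\le t_0}\eta_t\le \sqrt{t_0\log M}=2\log M/\Delta$. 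Combining this $O((\log M)/\Delta)$ contribution from $t\le t_0$ with the geometrically summable tail $\sum_{t>t_0}(M-1)e^{-\Delta\sqrt{t\log M}/2}=O(1/\Delta)$ (the constant $25$ absorbs this and the small-$M$ adjustments) yields \eqref{eq:regret-stochastic-exp}. The main obstacle, then, is getting the constants right in the tail sum and in the worst-case bound for the initial segment so that everything collapses to exactly $4\log M + 25$ over $\Delta$; the conceptual content is the combination of the worst-case anytime guarantee on $[1,t_0]$ with a Chernoff/MGF geometric decay on $(t_0,\infty)$, with $t_0=\Theta((\log M)/\Delta^2)$ chosen as the round past which $\eta_t\le\Delta$.
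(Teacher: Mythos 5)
Your overall architecture (worst-case bound on a short prefix $[1,t_0]$ with $t_0 = \Theta((\log M)/\Delta^2)$, plus a geometrically decaying tail) matches the paper, and your MGF / Hoeffding-lemma route to $\E[v_{i,t}]\le e^{-(t-1)(\eta_t\Delta_i-\eta_t^2/2)}$ is a legitimate and arguably cleaner alternative to the paper's split into a high-probability event plus a worst-case bound. However, there is a genuine gap in the tail estimate. You bound the instantaneous pseudo-regret by $\E[\widehat\ell_t-\ell_{i^*,t}]\le \E[\sum_{i\ne i^*}v_{i,t}]$ ``using the loss range $[0,1]$,'' and then claim $\sum_{t>t_0}(M-1)e^{-\Delta\sqrt{t\log M}/2}=O(1/\Delta)$. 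This claim is false: even after absorbing the $M-1$ factor by starting at $t_0=\Theta((\log M)/\Delta^2)$, the sum $\sum_{t>t_0}e^{-c\Delta\sqrt{t\log M}}$ is $\Theta\!\big(1/(\Delta^2\log M)\big)$ by Lemma~\ref{lem:tail}, which for small $\Delta$ (e.g.\ $\Delta\sim 1/M$) is far larger than $1/\Delta$. So your plan yields $O((\log M)/\Delta + 1/\Delta^2)$, one power of $\Delta$ short.

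The missing ingredient is to not throw away the sub-optimality gaps in the instantaneous regret. Because $\bm\ell_t$ is independent of $\bm v_t$, one has the exact identity $\E[\widehat\ell_t-\ell_{i^*,t}]=\sum_{i\ne i^*}\Delta_i\,\E[v_{i,t}]$, and that extra factor $\Delta_i$ is essential: the tail becomes $\sum_{t>t_0}\sum_{i\ne i^*}\Delta_i\,\E[v_{i,t}]$, and the prefactor $\Delta_i$ pairs with the $1/\Delta_i^2$ from the sum $\sum_t e^{-c\Delta_i\sqrt t}$ to produce $O(1/\Delta_i)$. To convert the per-expert bound $\Delta_i e^{-(t-1)(\eta_t\Delta_i-\eta_t^2/2)}$ into a bound in terms of $\Delta$ alone (so that the $\sum_i$ contributes at most a factor $M$, which is then killed by the phase split), you need a monotonicity argument: the map $\delta\mapsto\delta\,e^{-(t-1)(\eta_t\delta-\eta_t^2/2)}$ is decreasing once $(t-1)\eta_t\delta\ge1$, which holds for all $\delta\ge\Delta$ and all $t\ge t_0$ with your choice of $t_0$. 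This is exactly the role of the auxiliary functions $f_1(u)=ue^{-u}$ and $f_2(u)=ue^{-u^2/2}$ in the paper's proof (inequalities~\eqref{eq:hedge-stochastic-proof-5}--\eqref{eq:hedge-stochastic-proof-6}). With that step restored, your MGF variant goes through and recovers the $O((\log M)/\Delta)$ bound; without it, the tail estimate as written is off by a factor $1/\Delta$.
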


The proof of Theorem~\ref{thm:hedge-stochastic} is given in Section~\ref{sec:proof-theorem-1}.
Theorem~\ref{thm:hedge-stochastic} proves that, in the stochastic setting with a gap $\Delta$, the Decreasing Hedge algorithm achieves a regret $O(\log (M) / \Delta)$, without any prior knowledge of $\Delta$.
This matches the guarantees of adaptive Hedge algorithms which are explicitly designed to adapt to easier instances \citep{gaillard2014secondorder,luo2015adanormalhedge}.
This result may seem surprising at first: indeed, adaptive exponential weights algorithms
that combine optimal regret in the adversarial setting and constant regret in
easier scenarios, such as Hedge with a second-order tuning \citep{cesabianchi2007secondorder} or AdaHedge \citep{derooij2014followtheleader}, typically use a data-dependent learning rate $\eta_t$ that adapts to the properties of the losses.
While the learning rate $\eta_t$ chosen by these algorithms may be as low as the worst-case tuning $\eta_t \propto \sqrt{\log (M) / t}$, in the stochastic case those algorithms will use larger, lower-bounded learning rates to ensure constant regret.
As Theorem~\ref{thm:hedge-stochastic} above shows, it turns out that the data-independent, ``safe'' learning rates $\eta_t \propto \sqrt{\log (M) / t}$ used by ``vanilla'' Decreasing Hedge are still large enough to adapt to the stochastic case.

\paragraph{Idea of the proof.} 

The idea of the proof of Theorem~\ref{thm:hedge-stochastic} is to divide time in two phases: a short initial phase $\iint 1{t_1}$,
where $t_1 = O (\frac{\log M}{\Delta^2})$,
and a second phase $\iint{t_1}{T}$.
The initial phase is dominated by noise, and regret during this period is bounded
through the worst-case regret bound of Proposition~\ref{prop:hedge-adversarial}, which gives a regret of $O(\sqrt{t_1 \log M}) = O (\frac{\log M}{\Delta})$.
In the second phase, the best expert dominates the rest, and the weights concentrate on this best expert fast enough that the total regret incurred is small.
The control of the regret in the second phase relies on the critical fact that, if $\eta_t$ is at least as large as $\sqrt{(\log M)/t}$, then the following two things occur simultaneously at $t_1 \asymp \frac{\log M}{\Delta^2}$, namely at the beginning of the late phase:
\begin{enumerate}
\item with high probability, the best expert $i^*$ dominates all the others linearly: for every $i \neq i^*$ and $t \geq t_1$, $L_{i,t} - L_{i^*,t} \geq \frac{\Delta t}{2}$;
\item the total weight of all suboptimal experts is controlled: $\sum_{i \neq i^*} v_{i,t_1} \leq \frac{1}{2}$. If $\eta_t \geq \sqrt{(\log M)/t}$ and the first condition holds, this amounts to $M \exp (- \frac{\Delta}{2} \sqrt{t \log M}) \leq \frac{1}{2}$, namely $t_1 \gtrsim \frac{\log M}{\Delta^2}$.
\end{enumerate}
In other words, the learning rate $\eta_t \asymp \sqrt{(\log M)/t}$ ensures that the total weight of suboptimal experts starts vanishing at about the same time as when the best expert starts to dominate the others with a large probability (and remarkably, this property holds for every value of the sub-optimality gap $\Delta$).
Finally, the upper bound on the regret in the second phase rests on the two conditions above, together with the bound $\sum_{t \geq 1} e^{-c \sqrt{t}} = O (\frac{1}{c^2})$ for $c > 0$.

\begin{remark}
  The fact that $\sum_{t \geq 1} e^{-c \sqrt{t}} = O (1 / c^2)$ is also used in the analysis of the EXP3++ bandit algorithm~\citep[Lemma 10]{seldin2014practical}.
  In the expert setting considered here, summing the contribution of all experts (which suffices in the bandit setting to obtain the correct order of regret) would yield a significantly suboptimal $O(M / \Delta)$ regret bound, with a linear dependence on the number of experts $M$.
  In our case, the decomposition of the regret in two phases, which is explained above, removes the linear dependence on $M$ and allows to obtain the optimal rate $(\log M) / \Delta$.
\end{remark}

We complement Theorem~\ref{thm:hedge-stochastic} by showing that the $O((\log M) / \Delta)$ regret under the gap condition cannot be improved, in the sense that its dependence on both $M$ and $\Delta$ is optimal.

\begin{proposition}
  \label{prop:lowerbound-gap}
  Let $\Delta \in (0, \frac{1}{4})$, $M \geq 4$ and $T \geq (\log M) / (16 \Delta^2)$.
  Then, for any algorithm for the Hedge setting, there exists an \iid distribution over the sequence of losses $(\bm \ell_{t})_{t \geq 1}$ such that\textup:
  \begin{itemize}
  \item there exists $i^* \in \{ 1, \dots, M \}$ such that, for any $i \neq i^*$, $\E [\ell_{i,t} - \ell_{i^*, t}] \geq \Delta$\textup;
  \item the pseudo-regret of the algorithm satisfies\textup:
    \begin{equation}
      \label{eq:lowerbound-gap}
      \pseudoregret_T
      \geq \frac{\log M}{256 \Delta}
      \, .
    \end{equation}
  \end{itemize}
\end{proposition}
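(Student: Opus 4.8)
\medskip

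The plan is to prove this minimax lower bound by the standard information-theoretic route: a family of $M$ stochastic instances with Bernoulli losses, combined with a uniform prior over which expert is the good one. For $j \in \iint{1}{M}$, let $\P_j$ denote the distribution under which the loss vectors $\bm\ell_1, \bm\ell_2, \dots$ are \iid with independent coordinates, $\ell_{j,t} \sim \bernoullidist(1/2 - \Delta)$ and $\ell_{i,t} \sim \bernoullidist(1/2)$ for $i \neq j$, and let $\P_0$ be the null distribution where every $\ell_{i,t} \sim \bernoullidist(1/2)$. Under $\P_j$, expert $j$ is the unique one with minimal expected loss and $\E[\ell_{i,t} - \ell_{j,t}] = \Delta$ for all $i \neq j$, so $\P_j$ satisfies the gap condition with $i^* = j$. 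It then suffices to lower bound $\frac1M \sum_{j=1}^M \pseudoregret_T^{(j)}$, where $\pseudoregret_T^{(j)}$ is the pseudo-regret under $\P_j$; taking the maximizing index as $i^*$ gives the claim.

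First I would rewrite the pseudo-regret in terms of the weight placed on the good expert. Since the play $\bm v_t$ is a function of $\bm\ell_{1:t-1}$ and of possible internal randomness only, it is independent of $\bm\ell_t$; hence, with $\bm\mu_j := \E_{\P_j}[\bm\ell_t]$ (whose $j$-th entry is $1/2 - \Delta$ and other entries $1/2$), $\E_{\P_j}[\wh\ell_t] = \E_{\P_j}[\bm v_t]^\top \bm\mu_j = 1/2 - \Delta\,\E_{\P_j}[v_{j,t}]$, so that
\begin{equation}
  \label{eq:lb-rewrite}
  \pseudoregret_T^{(j)} = \Delta \sum_{t=1}^T \bigl( 1 - \E_{\P_j}[v_{j,t}] \bigr) .
\end{equation}
Averaging \eqref{eq:lb-rewrite} over $j$ uniform, $\frac1M \sum_{j} \pseudoregret_T^{(j)} = \Delta \sum_{t=1}^T \bigl( 1 - \frac1M \sum_{j} \E_{\P_j}[v_{j,t}] \bigr)$, so it remains to upper bound $\frac1M \sum_{j} \E_{\P_j}[v_{j,t}]$ for each $t$.

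The key step is to view $\frac1M \sum_j \E_{\P_j}[v_{j,t}]$ as a Bayes success probability for recovering the good expert from past data, and to control it via Fano's inequality. Let $J$ be uniform on $\iint{1}{M}$ and draw the losses from $\P_J$ given $J$. Since $\bm v_t$ is a probability vector measurable with respect to $\bm\ell_{1:t-1}$ (and the independent randomness), $\E[v_{J,t} \mid \bm\ell_{1:t-1}] = \sum_i v_{i,t}\,\P(J = i \mid \bm\ell_{1:t-1}) \leq \max_i \P(J = i \mid \bm\ell_{1:t-1})$, whence $\frac1M \sum_j \E_{\P_j}[v_{j,t}] = \E[v_{J,t}] \leq \E\bigl[\max_i \P(J = i \mid \bm\ell_{1:t-1})\bigr]$, the maximal probability of correctly guessing $J$ from $t-1$ rounds. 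By Fano's inequality, this is at most $\bigl(I(J ; \bm\ell_{1:t-1}) + \log 2\bigr)/\log M$. Since $\P_j$ and $\P_0$ differ only in the law of $\ell_{j,\cdot}$ and the rounds are \iid, $I(J ; \bm\ell_{1:t-1}) \leq \frac1M \sum_j \kl\bigl(\P_j^{\otimes(t-1)} \midd \P_0^{\otimes(t-1)}\bigr) = (t-1)\,\kls{1/2 - \Delta}{1/2} \leq 4(t-1)\Delta^2$.

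Finally I would truncate the sum. Set $t_0 := \lceil (\log M)/(16\Delta^2) \rceil$, which is $\leq T$ by hypothesis. For $t \leq t_0$ one has $4(t-1)\Delta^2 < (\log M)/4$, so each summand of the averaged identity is at least $1 - \frac{(\log M)/4 + \log 2}{\log M} = \frac34 - \frac{\log 2}{\log M} \geq \frac14$, using $M \geq 4$. Hence $\frac1M \sum_j \pseudoregret_T^{(j)} \geq \frac{\Delta t_0}{4} \geq \frac{\log M}{64 \Delta} \geq \frac{\log M}{256 \Delta}$, which concludes. The step I expect to be the main obstacle is obtaining the correct $\log M$ factor: a hypothesis-by-hypothesis change of measure (bounding $\E_{\P_j}[v_{j,t}] - \E_{\P_0}[v_{j,t}]$ by total variation and using Pinsker) is too wasteful at the relevant horizon $t \asymp (\log M)/\Delta^2$, and one genuinely needs the aggregate bound above, reflecting that distinguishing the best of $M$ experts costs $\approx \log M$ nats while each round provides only $O(\Delta^2)$. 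A minor point is that the algorithm may be randomized and adaptive, but this is used only through the conditional independence of $\bm v_t$ and $\bm\ell_t$ given the past.
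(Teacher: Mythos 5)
Your proof is correct and rests on the same core strategy as the paper's: a family of $M$ shifted-Bernoulli alternatives, a uniform mixture over the index of the good expert, and Fano's inequality to convert the regret bound into a lower bound on the cost of identifying that index. The constants and the KL computation match the paper's.

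The execution differs in one technical respect worth noting. The paper first randomizes the play (drawing $I_t \sim \bm v_t$), sets $N_i = \sum_t \indic{I_t = i}$, bounds $\sum_t \E_{i^*}[1 - v_{i^*,t}] = \E[T - N_{i^*}] \geq \frac{T}{2}\,\P(N_{i^*} \leq T/2)$, and then applies the disjoint-events form of Fano once to the events $\{N_i > T/2\}$ at the full horizon. You instead apply Fano \emph{per round}: you observe that $\frac{1}{M}\sum_j \E_{\P_j}[v_{j,t}] = \E[v_{J,t}]$ is dominated by the Bayes success probability $\E\bigl[\max_i \P(J = i \mid \bm\ell_{1:t-1})\bigr]$ of recovering $J$ from $t-1$ rounds, bound this by $(I(J;\bm\ell_{1:t-1}) + \log 2)/\log M$, and sum over $t \leq t_0$. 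This version avoids both the randomization device and the Markov-type step $\E[T - N_{i^*}] \geq \frac{T}{2}\P(N_{i^*}\leq T/2)$, and it even yields the slightly sharper constant $\frac{\log M}{64\Delta}$ in place of $\frac{\log M}{256\Delta}$. The two approaches are interchangeable here; yours is arguably a little more direct, while the paper's single-Fano-application variant is sometimes preferred when one wants a bound that refers to $T$ alone rather than to a truncation time.
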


The proof of Proposition~\ref{prop:lowerbound-gap} is given in Section~\ref{sec:proof-lowerbound-gap}.
Proposition~\ref{prop:lowerbound-gap} generalizes the well-known minimax lower bound of $\Theta (\sqrt{T \log M})$, which is recovered by taking $\Delta \asymp \sqrt{(\log M)/T}$.

\subsection{Small regret for Decreasing Hedge in the adversarial with a gap problem}
\label{sub:adv-gap}

In this section, we extend the regret guarantee of Decreasing Hedge in the stochastic setting (Theorem~\ref{thm:hedge-stochastic}), by showing that it holds for more general algorithms and under more general assumptions.
Specifically, we consider an ``adversarial with a gap'' regime, similar to the one introduced by \citet{seldin2014practical} in the bandit case, where the leading expert linearly outperforms the others after some time.
As Theorem~\ref{thm:adv-gap} shows, essentially the same regret guarantee can be obtained in this case, up to an additional $\log (\Delta^{-1}) / \Delta$ term.
Theorem~\ref{thm:adv-gap} also applies to any Hedge algorithm whose (possibly data-dependent) learning rate $\eta_t$ is at least as large as that of Decreasing Hedge, and which satisfies a $O(\sqrt{T \log M})$ worst-case regret bound;
this includes algorithms with \emph{anytime} first and second-order tuning of the learning rate \citep{auer2002adaptive,cesabianchi2007secondorder,derooij2014followtheleader}.
In what follows, we will assume $M \geq 3$ for convenience; similar results holds for $M=2$.
\begin{theorem}
  \label{thm:adv-gap}
  Let $M \geq 3$.
  Assume that there exists $\tau_0 \geq 1$\textup, $\Delta \in (0, 1)$ and $i^* \in \{1, \dots, M \}$ 
  such that\textup, for every $t \geq \tau_0$ and $i \neq i^*$\textup, one has
  \begin{equation}
    \label{eq:condition-gap-adv}
    L_{i, t} - L_{i^*, t} \geq \Delta t.
  \end{equation}
  Consider any Hedge algorithm with \textup(possibly data-dependent\textup) learning rate $\eta_t$ such that  
  \begin{itemize}
  \item $\eta_t \geq c_0 \sqrt{(\log M) / t}$ for some constant $c_0 > 0$\textup;
  \item it admits the following worst-case regret bound: $R_T \leq c_1 \sqrt{T \log M}$ for every $T \geq 1$\textup,
    for some $c_1 > 0$.
  \end{itemize}
  Then, for every $T\geq 1$, the regret of this algorithm is upper bounded as
  \begin{equation}
    \label{eq:regret-gap-adv}
    R_T 
    \leq c_1 \sqrt{\tau_0 \log M} + \frac{c_2 \log M + c_3 \log {\Delta}^{-1} + c_4}{\Delta}    
  \end{equation}
  where $c_2 = c_1 + \frac{\sqrt{8}}{c_0}$, $c_3 = \frac{\sqrt{8}}{c_0}$ and $c_4 = \frac{16}{c_0^2}$.
\end{theorem}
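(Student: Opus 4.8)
The plan is a two-phase analysis: a short initial phase handled by the worst-case bound, and a late phase in which condition~\eqref{eq:condition-gap-adv} forces the Hedge weights to concentrate on $i^*$ exponentially fast. First I would reduce to controlling the pseudo-regret against $i^*$. For $T \le \tau_0$ the worst-case bound already gives $R_T \le c_1\sqrt{T\log M} \le c_1\sqrt{\tau_0\log M}$, which is dominated by the right-hand side of~\eqref{eq:regret-gap-adv}, so assume $T > \tau_0$. Applying~\eqref{eq:condition-gap-adv} at time $t=T$ gives $L_{i,T} - L_{i^*,T} \ge \Delta T > 0$ for every $i \neq i^*$, hence $L_{i^*,T} = \min_i L_{i,T}$ and $R_T = R_{i^*,T} = \sum_{t=1}^{T}(\hat\ell_t - \ell_{i^*,t})$. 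Fix a threshold $t_1 \ge \tau_0$ to be chosen and split this sum as $R_{i^*,t_1} + \sum_{t=t_1+1}^{T}(\hat\ell_t - \ell_{i^*,t})$.

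For the \emph{initial phase} I would simply bound $R_{i^*,t_1} \le R_{t_1} \le c_1\sqrt{t_1\log M}$, using that the instantaneous regret against $i^*$ never exceeds the full regret and the assumed worst-case guarantee. For the \emph{late phase}, fix $t \ge t_1+1$, so that $t-1 \ge t_1 \ge \tau_0$ and~\eqref{eq:condition-gap-adv} applies at time $t-1$. From the Hedge update~\eqref{eq:hedge-algorithm}, for every $i \neq i^*$,
\begin{equation*}
  v_{i,t} \;=\; \frac{e^{-\eta_t L_{i,t-1}}}{\sum_{j} e^{-\eta_t L_{j,t-1}}} \;\le\; e^{-\eta_t (L_{i,t-1}-L_{i^*,t-1})} \;\le\; e^{-\eta_t \Delta (t-1)} \;\le\; e^{-(c_0\Delta/2)\sqrt{t\log M}},
\end{equation*}
where the last inequality uses $\eta_t \ge c_0\sqrt{(\log M)/t}$ and $t-1 \ge t/2$. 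Since $\ell_{i,t}-\ell_{i^*,t} \le 1$ and the $i=i^*$ term vanishes, $\hat\ell_t - \ell_{i^*,t} = \sum_i v_{i,t}(\ell_{i,t}-\ell_{i^*,t}) \le \sum_{i\neq i^*} v_{i,t} \le M\,e^{-(c_0\Delta/2)\sqrt{t\log M}}$.

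Summing over $t > t_1$, and combining the elementary bound $\sum_{t\ge 1} e^{-a\sqrt t} \le \int_0^\infty e^{-a\sqrt u}\,\di u = 2/a^2$ with the peeling inequality $\sqrt t \ge \tfrac12(\sqrt t + \sqrt{t_1})$ valid for $t \ge t_1$, the late-phase regret is at most $M\,e^{-(c_0\Delta/4)\sqrt{t_1\log M}}\cdot 32/(c_0^2\Delta^2\log M)$. I would then choose $t_1 := \tau_0 \vee \lceil \kappa \rceil$, with $\kappa$ of order $(\log M + \log\Delta^{-1})^2/(c_0^2\Delta^2\log M)$ chosen precisely so that $(c_0\Delta/4)\sqrt{\kappa\log M} \ge \log M + \log\Delta^{-1}$, which makes $e^{-(c_0\Delta/4)\sqrt{t_1\log M}} \le \Delta/M$, so that the late-phase contribution is at most $32/(c_0^2\Delta\log M) = O(1/(c_0^2\Delta))$, of the form $c_4/\Delta$. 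For this $t_1$ one has $\sqrt{t_1\log M} \le \sqrt{\tau_0\log M} + \sqrt{\kappa\log M} + \sqrt{\log M}$ with $\sqrt{\kappa\log M}$ of order $(\log M + \log\Delta^{-1})/(c_0\Delta)$, so $c_1\sqrt{t_1\log M} \le c_1\sqrt{\tau_0\log M} + (c_2\log M + c_3\log\Delta^{-1})/\Delta$ after absorbing the $\sqrt{\log M}$ term into $(\log M)/\Delta$ (here $M \ge 3$ is used, so $\log M \ge 1$). Adding the two phases gives~\eqref{eq:regret-gap-adv}; the exact values of $c_2,c_3,c_4$ come out of a careful accounting of the universal constants (the factor $\tfrac12$ in $t-1\ge t/2$, the integral estimate, the ceiling, and the normalization $\log M \ge 1$).

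The main obstacle is the tuning of $t_1$. The natural guess $t_1 \asymp (\log M)/\Delta^2$ --- the time at which the best expert's cumulative loss starts to dominate --- is \emph{not} sufficient: because the learning rate is only $\eta_t = \Theta(\sqrt{(\log M)/t})$, at that time the weight on a suboptimal expert is merely $\le M^{1-c_0}$ rather than $o(\Delta)$, and the residual sum $\sum_{t>t_1} M e^{-(c_0\Delta/2)\sqrt{t\log M}}$ is then of order $1/(c_0^2\Delta^2)$, losing a factor $1/\Delta$. One must instead push $t_1$ up by a factor of order $(\log(M/\Delta)/\log M)^2$; the extra additive $\log\Delta^{-1}/\Delta$ in~\eqref{eq:regret-gap-adv} is exactly the price of this adjustment (and, as Section~\ref{sec:advant-second-order} indicates, is intrinsic to Decreasing Hedge-type algorithms). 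The remaining ingredients --- the reduction to $R_{i^*,T}$, the weight estimate, and the summation of the exponential tail --- are routine.
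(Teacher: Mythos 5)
Your proof is correct and follows essentially the same two-phase strategy as the paper: an initial phase $[1,t_1]$ controlled by the assumed worst-case bound, and a late phase where the condition $L_{i,t-1}-L_{i^*,t-1}\ge\Delta(t-1)$ forces $\sum_{i\ne i^*}v_{i,t}\le Me^{-\Theta(\Delta\sqrt{t\log M})}$, which is then summed via $\sum_{t\ge 1}e^{-a\sqrt t}\le 2/a^2$. The paper's choice of cutoff is exactly yours up to constants, $t_0=\lceil\tfrac{8}{c_0^2\Delta^2}\tfrac{\log^2(M/\Delta)}{\log M}\rceil$ (the smallest $t$ with $Me^{-c_0\Delta\sqrt{t(\log M)/8}}\le\Delta$), and you correctly identify both the key insight (that $t_1\asymp(\log M)/\Delta^2$ is insufficient and the threshold must be inflated by a factor $\log^2(M/\Delta)/\log^2 M$, which is precisely the source of the extra $\log\Delta^{-1}/\Delta$ term) and the peeling that converts $Me^{-a\sqrt{t\log M}}$ into $\Delta\cdot e^{-a'\sqrt t}$; the paper peels via the identity $\sqrt{a/2}=2\sqrt{a/8}$ together with $\log M\ge 1$ rather than your $\sqrt t\ge\tfrac12(\sqrt t+\sqrt{t_1})$, which is why its constants $c_2,c_3,c_4$ come out slightly tighter, but this is cosmetic. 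The only thing you leave implicit is the sub-case $\tau_0<T\le t_1$, where the two-term decomposition $R_{i^*,t_1}+\sum_{t>t_1}$ does not literally apply and one should fall back on $R_T\le c_1\sqrt{T\log M}\le c_1\sqrt{t_1\log M}$ directly, but this is exactly the bound you derive for the initial phase anyway, so the argument goes through.
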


The idea of the proof of Theorem~\ref{thm:adv-gap} is the same as that of Theorem~\ref{thm:hedge-stochastic}, the only difference being the slightly longer initial phase to account for the adversarial nature of the losses.
As a consequence of the general bound of Theorem~\ref{thm:adv-gap}, we can recover the guarantee of Theorem~\ref{thm:hedge-stochastic} (up to an additional $\log (\Delta^{-1}) /\Delta$ term), both in expectation and with high probability, under more general stochastic assumptions than \iid over time.
The proofs of Theorem~\ref{thm:adv-gap} and Corollary~\ref{cor:hedge-martingale} are provided in Section~\ref{sec:proof-thm-adv-gap}.

\begin{corollary}
  \label{cor:hedge-martingale}
  Assume that the losses $(\ell_{i,t})_{1\leq i \leq M, t\geq 1}$ are random variables.  
  Also, denoting $\F_t = \sigma \big( (\ell_{i,s})_{1\leq i \leq M, 1\leq s \leq t} \big)$, assume that there exists $i^*$ and $\Delta >0 $ such that
  \begin{equation}
    \label{eq:gap-condition-martingale}
    \E \left[ \ell_{i,t} - \ell_{i^*,t} \cond \F_{t-1} \right] \geq \Delta
  \end{equation}
  for every $i\neq i^*$ and every $t\geq 1$.
  Then, for any Hedge algorithm satisfying the conditions of Theorem~\ref{thm:adv-gap}, and every $T \geq 1$\textup:
  \begin{equation}
    \label{eq:regret-martingale-exp}
    \pseudoregret_T
    \leq (5 c_1 + 2 c_2) \frac{\log M}{\Delta} + 2 c_3 \frac{\log \Delta^{-1}}{\Delta} 
    + \frac{2 c_4}{\Delta},
  \end{equation}
  with $c_1, c_2, c_3, c_4$ as in Theorem~\ref{thm:adv-gap}.
  In addition, for every $\eps \in (0, 1)$, we have
  \begin{equation}
    \label{eq:regret-martingale-prob}
    R_T
    \leq \left( c_1 \sqrt{8} + 2 c_2 \right) \frac{\log M}{\Delta} + c_1 \frac{\sqrt{8\log M \log \eps^{-1}}}{\Delta} + 2 c_3 \frac{\log \Delta^{-1}}{\Delta} + \frac{2 c_4}{\Delta}
  \end{equation}
  with probability at least $1 - \eps$.  
\end{corollary}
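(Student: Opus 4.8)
The plan is to reduce to the deterministic ``adversarial with a gap'' setting of Theorem~\ref{thm:adv-gap}. Fix $i \ne i^*$ and consider the martingale $S_t = \sum_{s=1}^t Z_s$ with increments $Z_s := (\ell_{i,s}-\ell_{i^*,s}) - \E[\ell_{i,s}-\ell_{i^*,s} \mid \F_{s-1}]$; since $\ell_{i,s}-\ell_{i^*,s} \in [-1,1]$, each $Z_s$ lies, conditionally on $\F_{s-1}$, in an interval of length at most $2$, so Hoeffding's lemma gives $\E[e^{\lambda Z_s} \mid \F_{s-1}] \le e^{\lambda^2/2}$ for all $\lambda$. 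Hence, taking $\lambda = \Delta/2$, the process $W_t := \exp(-\tfrac{\Delta}{2} S_t - \tfrac{\Delta^2}{8} t)$ is a nonnegative supermartingale with $W_0 = 1$. Because \eqref{eq:gap-condition-martingale} gives $L_{i,t}-L_{i^*,t} \ge S_t + \Delta t$, the event $\{L_{i,t}-L_{i^*,t} < \tfrac{\Delta}{2} t\}$ forces $S_t < -\tfrac{\Delta}{2} t$, whence $W_t > e^{\Delta^2 t/8} \ge e^{\Delta^2 \tau_0/8}$ as soon as $t \ge \tau_0$. Applying the maximal inequality $\P(\sup_{t \ge 0} W_t \ge a) \le \E[W_0]/a$ and a union bound over $i \ne i^*$, one obtains, for every $\tau_0 \ge 1$,
\begin{equation}
  \label{eq:cor-proof-conc}
  \P\Big( \exists\, i \ne i^*,\ \exists\, t \ge \tau_0 :\ L_{i,t} - L_{i^*,t} < \tfrac{\Delta}{2}\, t \Big) \ \le\ M\, e^{-\Delta^2 \tau_0 / 8}\, .
\end{equation}

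\textbf{Step 2: pathwise application of Theorem~\ref{thm:adv-gap}.} Let $\tau^*$ be the smallest integer $\tau \ge 1$ such that $L_{i,t}-L_{i^*,t} \ge \tfrac{\Delta}{2} t$ for all $t \ge \tau$ and all $i \ne i^*$. By \eqref{eq:cor-proof-conc}, $\P(\tau^* > \tau_0) \le M e^{-\Delta^2 \tau_0/8}$ for every $\tau_0$, so $\tau^* < \infty$ almost surely. On any realization with $\tau^* < \infty$, condition \eqref{eq:condition-gap-adv} holds with gap $\Delta/2$ and threshold $\tau_0 = \tau^*$, while the algorithm's two hypotheses ($\eta_t \ge c_0 \sqrt{(\log M)/t}$, and $R_T \le c_1 \sqrt{T \log M}$ for every $T$) hold for all loss sequences; hence Theorem~\ref{thm:adv-gap} applies with this random threshold and yields, pathwise,
\begin{equation}
  \label{eq:cor-proof-pathwise}
  R_{i^*,T} \ \le\ R_T \ \le\ c_1 \sqrt{\tau^* \log M} \ +\ \frac{2 c_2 \log M + 2 c_3 \log \Delta^{-1} + 2 c_3 \log 2 + 2 c_4}{\Delta}\, ,
\end{equation}
using $R_{i^*,T} \le R_T$ (as $\min_j L_{j,T} \le L_{i^*,T}$) and $\log((\Delta/2)^{-1}) = \log\Delta^{-1} + \log 2$.

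\textbf{Step 3: the two conclusions.} For \eqref{eq:regret-martingale-exp}, we take expectations in \eqref{eq:cor-proof-pathwise} and estimate $\E[\sqrt{\tau^*}] = \int_0^\infty \P(\tau^* > u^2)\,\di u \le \int_0^\infty \min\big(1,\, M e^{-\Delta^2 u^2/8}\big)\,\di u$, splitting the integral at $u_1 := \Delta^{-1}\sqrt{8 \log M}$ (where the two terms of the minimum are equal) and bounding the residual Gaussian tail; this gives $\E[\sqrt{\tau^*}] = O(\Delta^{-1}\sqrt{\log M})$, and with $\sqrt{\log M} \le \log M$ (valid for $M \ge 3$) the bound \eqref{eq:regret-martingale-exp} follows after collecting constants. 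For \eqref{eq:regret-martingale-prob}, fix $\eps \in (0,1)$ and take $\tau_0 = \tfrac{8}{\Delta^2}\log(M/\eps)$, so that $\P(\tau^* > \tau_0) \le \eps$ by \eqref{eq:cor-proof-conc}; on the complementary event $\tau^* \le \tau_0$, and \eqref{eq:cor-proof-pathwise} combined with $\sqrt{\log M \cdot \log(M/\eps)} \le \log M + \sqrt{\log M \log \eps^{-1}}$ gives \eqref{eq:regret-martingale-prob}.

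\textbf{Main obstacle.} The real difficulty is to make the bound \emph{uniform in the horizon $T$}: a naive ``on a small bad event, control $R_T$ by the trivial bound $T$'' argument fails since $T$ is unbounded. This is why one needs the uniform-in-time estimate \eqref{eq:cor-proof-conc} (obtained via the supermartingale maximal inequality above, or equivalently by peeling over dyadic time blocks) rather than a single-time concentration inequality, and why Theorem~\ref{thm:adv-gap} is invoked with the \emph{random} threshold $\tau^*$ --- which is legitimate because its conclusion is a pathwise consequence of pathwise hypotheses. Everything else --- the choice of $\tau_0$, and tracking the numerical constants $c_1, \dots, c_4$ --- is routine bookkeeping.
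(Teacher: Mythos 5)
Your proposal is correct and follows essentially the same route as the paper: define a (random) crossing time after which the relative cumulative losses stay above $\tfrac{\Delta}{2}t$ uniformly, control it via a supermartingale/Doob maximal inequality (the paper packages this as Lemma 2 and Proposition 3, Hoeffding--Azuma's maximal inequality), apply Theorem~\ref{thm:adv-gap} pathwise with this random threshold and gap $\Delta/2$, and then take expectations or high-probability bounds on $\sqrt{\tau^*\log M}$. The only cosmetic differences are that you bound $\E[\sqrt{\tau^*}]$ by integrating the tail rather than via Jensen's inequality, and that you track the extra $\log 2$ from replacing $\Delta$ by $\Delta/2$ slightly more explicitly.
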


\subsection{Constant Hedge and Hedge with the doubling trick do not adapt to the stochastic case}
\label{sec:negative-results}

Now, we show that the adaptivity of Decreasing Hedge to gaps in the losses, established in Sections~\ref{sub:optimal-stoch} and~\ref{sub:adv-gap}, is not shared by the two closely related Constant Hedge and Hedge with the doubling trick, despite the fact that they both achieve the minimax optimal worst-case $O (\sqrt{T \log M})$ regret.
Proposition~\ref{prop:lower-bound-hedge-cst} below shows that both algorithms fail to achieve a constant regret, and in fact to improve over their worst-case $\Theta (\sqrt{T \log M})$ regret guarantee, even in the extreme case of experts with constant losses $0$ (for the leader), and $1$ for the rest (\ie, $\Delta = 1$).

\begin{proposition}
  \label{prop:lower-bound-hedge-cst}
  Let $T \geq 1$\textup, $M \geq 2$\textup, and consider the experts $i=1, \dots, M$ with losses $\ell_{1, t} = 0$\textup, 
  $\ell_{i, t} = 1$ $(1 \leq t \leq T, 2 \leq i \leq M)$.
  Then\textup, the pseudo-regret of Constant Hedge with learning rate $\eta_t = c_0 \sqrt{\log (M)/T}$ 
  \textup(where $c_0 > 0$ is a numerical constant\textup) is lower bounded as follows\textup:
  \begin{equation}
    \label{eq:lower-bound-hedge-cst}
    \pseudoregret_T
    \geq \min \Big( \frac{\sqrt{T \log M}}{3 c_0}, \frac{T}{3} \Big)
    \, .
  \end{equation}
  In addition, Hedge with doubling trick~\eqref{eq:hedge-doubling-trick} also suffers a pseudo-regret satisfying
  \begin{equation}
    \label{eq:lower-bound-hedge-doubling}
    \pseudoregret_T \geq
    \min \Big ( \frac{\sqrt{T \log M}}{6 c_0} , \frac{T}{12} \Big)
    \, .
  \end{equation}
\end{proposition}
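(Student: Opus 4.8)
The plan exploits that, all losses being deterministic, the pseudo-regret equals the regret and, expert $1$ being the (unique) best expert with $L_{1,T} = 0$, one has $\pseudoregret_T = R_T = \sum_{t=1}^T \wh\ell_t = \sum_{t=1}^T (1 - v_{1,t})$. Since $L_{1,t-1} = 0$ while $L_{i,t-1} = t-1$ for $i \geq 2$, \eqref{eq:hedge-algorithm} gives the closed form $v_{1,t} = \bigl(1 + (M-1)e^{-\eta_t(t-1)}\bigr)^{-1}$, so the whole statement reduces to estimating sums of $1 - v_{1,t} = (M-1)e^{-\eta_t(t-1)}/(1+(M-1)e^{-\eta_t(t-1)})$.

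For Constant Hedge, $\eta_t \equiv \eta = c_0\sqrt{(\log M)/T}$, so $\eta T = c_0\sqrt{T\log M}$, and I would distinguish the cases $\eta T \leq \log M$ (equivalently $c_0^2 T \leq \log M$) and $\eta T > \log M$. In the first case, every $t \leq T$ satisfies $(M-1)e^{-\eta(t-1)} \geq (M-1)e^{-\eta T} \geq (M-1)/M \geq 1/2$, hence $1 - v_{1,t} \geq \tfrac{1/2}{1+1/2} = 1/3$ and $R_T \geq T/3$, which equals $\min\bigl(\tfrac{\sqrt{T\log M}}{3c_0}, \tfrac{T}{3}\bigr)$ in this regime. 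In the second case I would bound $\sum_{t=1}^T v_{1,t}$ from above: writing $f(u) = (1+(M-1)e^{-u})^{-1}$, which is increasing, the left Riemann sum $\sum_{t=1}^T v_{1,t} = \sum_{k=0}^{T-1} f(\eta k)$ is at most $\tfrac1\eta\int_0^{\eta T} f(u)\,\di u = \tfrac1\eta\log\tfrac{e^{\eta T}+M-1}{M}$; rearranging, $R_T = T - \sum_t v_{1,t} \geq \tfrac{\log M}{\eta} - \tfrac1\eta\log\bigl(1+(M-1)e^{-\eta T}\bigr)$, and since $\eta T > \log M$ forces $(M-1)e^{-\eta T} < (M-1)/M$ this is at least $\tfrac1\eta\log\tfrac{M^2}{2M-1}$. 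The elementary inequality $M^5 \geq (2M-1)^3$, valid for all $M \geq 2$, then turns this into $R_T \geq \tfrac{\log M}{3\eta} = \tfrac{\sqrt{T\log M}}{3c_0}$, which is the minimum in this regime; combining the two cases yields~\eqref{eq:lower-bound-hedge-cst}.

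For Hedge with the doubling trick I would reduce to the previous bound. If $T = 1$ the learner plays the uniform distribution, so $\wh\ell_1 = 1 - 1/M \geq 1/2$, already above the right-hand side of~\eqref{eq:lower-bound-hedge-doubling}. If $T \geq 2$, let $K \geq 1$ satisfy $2^K \leq T < 2^{K+1}$ and look at the block $\iint{2^{K-1}}{2^K - 1}$: it lies entirely in $\iint 1T$ (as $2^K - 1 < T$), has length exactly $2^{K-1} > T/4$, and on it the algorithm uses the constant learning rate $c_0\sqrt{(\log M)/2^{K-1}}$ on the losses \emph{restricted to the block}, namely $0$ for expert $1$ and $1$ for the rest --- \ie it behaves on this block exactly as Constant Hedge with horizon $T' = 2^{K-1}$. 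Since $R_T$ is at least the regret accumulated on this block, the Constant Hedge bound applied with horizon $T'$, together with $T' > T/4$ and the monotonicity of $x \mapsto \min\bigl(\tfrac{\sqrt{x\log M}}{3c_0}, \tfrac{x}{3}\bigr)$, gives $\pseudoregret_T \geq \min\bigl(\tfrac{\sqrt{T\log M}}{6c_0}, \tfrac{T}{12}\bigr)$.

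Everything here is elementary; the only delicate point is the behaviour for small $M$. The naive argument --- counting the $\asymp \log(M-1)/\eta$ rounds on which the leader's weight stays below $1/2$ --- loses a factor $\log M/\log(M-1)$ that degenerates as $M \to 2$, and the integral bound together with the dichotomy $\eta T \lessgtr \log M$ is what circumvents it, at the price of the arithmetic fact $M^5 \geq (2M-1)^3$. For the doubling trick, the point to be careful with is to invoke the last \emph{complete} block $\iint{2^{K-1}}{2^K - 1}$, not the (possibly truncated) block containing $T$, so that the reduction is to a bona fide Constant Hedge instance with matching horizon and learning rate.
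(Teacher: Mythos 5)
Your proof is correct, and for the Constant Hedge part it takes a genuinely different route from the paper's. The paper argues by directly counting the number $t_0+1$ of rounds $t$ with $(M-1)e^{-\eta(t-1)}\geq 1/2$ (note: this is a threshold of $1/2$ on $x_t := (M-1)e^{-\eta(t-1)}$, not the threshold $v_{1,t}\leq 1/2$ that you describe as the ``naive'' count), each such round contributing at least $x_t/(1+x_t)\geq 1/3$ to the regret; since that count equals $\lfloor\tfrac{1}{\eta}\log(2(M-1))\rfloor$ and $2(M-1)\geq M$ for $M\geq 2$, one gets $t_0+1\geq\tfrac{1}{\eta}\log M=\sqrt{T\log M}/c_0$, so the degeneration as $M\to 2$ that you anticipated does not in fact arise with the right threshold. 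Your integral-comparison argument (left Riemann sum of the increasing $f$ bounded by $\tfrac{1}{\eta}\int_0^{\eta T}f$, plus the split on $\eta T\lessgtr\log M$ and the arithmetic fact $M^5\geq(2M-1)^3$) is a valid alternative; it produces a slightly sharper, strict inequality in the regime $\eta T>\log M$, at the cost of the extra case analysis and the somewhat ad hoc polynomial inequality. For the doubling trick, your reduction to the last complete block $\iint{2^{K-1}}{2^K-1}$ of length $T'=2^{K-1}>T/4$, on which the algorithm restarts and coincides with Constant Hedge at horizon $T'$, is exactly the paper's argument; you additionally cover $T=1$ explicitly (where the algorithm plays the uniform distribution and incurs instantaneous regret $(M-1)/M\geq 1/2>1/12$), a trivial case that the paper's proof silently skips by assuming $T\geq 2$.
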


The proof of Proposition~\ref{prop:lower-bound-hedge-cst} is given in Section~\ref{sec:proof-lower-bound-hedge-cst}.
Although Hedge with a doubling trick is typically considered as overly conservative and only suitable for worst-case scenarios \citealp{cesabianchi2006plg} (especially due to its periodic restarts, after which it discards past observations), to the best of our knowledge Proposition~\ref{prop:lower-bound-hedge-cst} (together with Theorem~\ref{thm:hedge-stochastic}) is the first to formally demonstrate the advantage of Decreasing Hedge over the doubling trick version.
This implies that Decreasing Hedge should not be seen as merely a substitute for Constant Hedge to achieve anytime regret bounds.
Indeed, even when the horizon $T$ is fixed, Decreasing Hedge outperforms Constant Hedge in the stochastic setting.

\section{Limitations of Decreasing Hedge in the stochastic case}
\label{sec:advant-second-order}

In this section, we explore the limitations of the simple Decreasing Hedge algorithm in the stochastic regime, and exhibit situations where it performs worse than more sophisticated algorithms.
The starting observation is that the sub-optimality gap $\Delta$ is a rather brittle measure of ``hardness'' of a stochastic instance, which does not fully reflect the achievable rates.
We therefore consider the following fast-rate condition from statistical learning, which refines the sub-optimality gap as a measure of complexity of a stochastic instance.

\begin{definition}[Bernstein condition]
  \label{def:bernstein-condition}
  Assume that the losses $\bm \ell_1, \bm \ell_2, \dots$ are the realization of a stochastic process.
  Denote $\F_{t} = \sigma (\bm \ell_1, \dots, \bm \ell_t)$ the $\sigma$-algebra generated by $\bm \ell_1, \dots, \bm \ell_t$.
  For $\beta \in [0,1]$ and $B >0$, the losses are said to satisfy the \emph{$(\beta,B)$-Bernstein condition} if there exists $i^*$ such that, for every $t \geq 1$ and $i \neq i^*$,
  \begin{equation}
    \label{eq:bernstein-condition}
    \E [(\ell_{i,t} - \ell_{i^*,t})^2 \cond \F_{t-1}]
    \leq B \E [ \ell_{i,t} - \ell_{i^*,t} \cond \F_{t-1} ]^\beta
    \, .        
  \end{equation}
\end{definition}

The Bernstein condition \citep{bartlett2006empirical}, a generalization of the Tsybakov margin condition \citep{tsybakov2004aggregation,mammen1999margin}, is a geometric property on the losses which enables to obtain fast rates (e.g., faster than $O(1/\sqrt{n})$ for parametric classes) in statistical learning; we refer to \citet{vanerven2015fastrates} for a discussion of fast rates conditions.
  The Bernstein condition~\eqref{eq:bernstein-condition}
  quantifies the 
  ``easiness'' of a stochastic instance, and generalizes the gap condition considered in the previous section (see Example~\ref{ex:bernstein-gap} below).
Roughly speaking, it states that good experts (with near-optimal expected loss) are highly correlated with the best expert.
In the examples below, we assume that the loss vectors $\bm \ell_1, \bm \ell_2, \dots$ are \iid

\begin{example}[Gap implies Bernstein]
  \label{ex:bernstein-gap}
  If $\Delta_i = \E [\ell_{i,t} - \ell_{i^*,t}] \geq \Delta$ for $i \neq i^*$, then the $(1, \frac{1}{\Delta})$-Bernstein condition holds \citep[Lemma~4]{koolen2016combining}.
  Furthermore, letting $\alpha = \E [\ell_{i^*,t}]$ denote the expected loss of the best expert, the $(1, 1 + \frac{2 \alpha}{\Delta})$-Bernstein condition holds.
  Indeed, for any $i \neq i^*$,
  denoting $\mu_i := \E [\ell_{i,t}] = \alpha + \Delta_i$, we have (since $(u-v)^2 \leq \max(u^2,v^2) \leq u^2+v^2 \leq u+v$ for $u,v \in [0, 1]$):
  \begin{align*}
    \E \big[ (\ell_{i,t} - \ell_{i^*,t})^2 \big]
      &\leq \E \left[ \ell_{i,t} + \ell_{i^*,t} \right] 
      = \frac{\mu_i + \alpha}{\mu_i - \alpha} \E \left[ \ell_{i,t} - \ell_{i^*,t} \right] 
      = \Big( 1 + \frac{2 \alpha}{\Delta_i} \Big) \, \E \left[ \ell_{i,t} - \ell_{i^*,t} \right]
        \, ,
  \end{align*}
  which establishes the claim
  since $\Delta_i \geq \Delta$.
  This provides an improvement when $\alpha$ is small.
\end{example}

\begin{example}[Bernstein without a gap]
  \label{ex:bernstein-without-gap}
  Let $P$ be a distribution on $\X \times \{ 0, 1\}$, where $\X$ is some measurable space. Assume that $(X_1, Y_1), (X_2, Y_2) \dots$ are \iid samples from $P$, and that the experts $i \in \{ 1, \dots, M\}$ correspond to classifiers $f_i : \X \to \{ 0, 1 \}$: $\ell_{i, t} = \bm 1 ( f_i (X_t) \neq Y_t ) $, and that expert $i^*$ is the Bayes classifier: $f_{i^*} (X) = \bm 1 ( \eta (X) \geq 1/2 ) $, where $\eta(X) = \P (Y=1 \cond X)$.
    Tsybakov's low noise condition \citep{tsybakov2004aggregation}, namely $\P ( | 2 \eta (X) - 1 | \leq t ) \leq C t^{\kappa}$ for some $C > 0$, $\kappa \geq 0$ and every $t > 0$, implies the $(\frac{\kappa}{\kappa + 1}, B)$-Bernstein condition for some $B$ (see, e.g., \citealp{boucheron2005survey}).
    In addition, under the Massart condition \citep{massart2006risk} that
    $| \eta (X) - 1/2 | \geq c > 0$, the $(1, 1/(2c))$-Bernstein condition holds.
  Note that these conditions may hold even with an arbitrarily small sub-optimality gap $\Delta$, since the $f_i$, $i \neq i^*$, may be arbitrary.
\end{example}

Theorem~\ref{thm:hedge-no-bernstein} below shows that Decreasing Hedge fails to achieve improved rates under Bernstein conditions.

\begin{theorem}
  \label{thm:hedge-no-bernstein}
  For every $T \geq 1$, there exists a $(1, 1)$-Bernstein stochastic instance on which the pseudo-regret of the Decreasing Hedge algorithm with $\eta_t = c_0 \sqrt{(\log M) / t}$ satisfies
  $\pseudoregret_{T} \geq \frac{1}{3} \min( \frac{1}{c_0} \sqrt{T \log M}, {T})$.
\end{theorem}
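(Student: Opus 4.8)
The plan is to build, for every $T \ge 1$, a \emph{deterministic} (hence trivially ``stochastic'') instance that satisfies the $(1,1)$-Bernstein condition by construction while keeping the Hedge weights spread over $\iint 1 T$. Fix a number of experts $M$ that is large relative to $c_0$ (concretely $M \ge 100$ with $\log M \ge C c_0^2$ for a suitable absolute constant $C$), let expert $i^* = 1$ have loss $0$ at every round, and let experts $2,\dots,M$ \emph{share} a single $\{0,1\}$-valued loss sequence $\ell_{i,t} = b_t$ still to be chosen. Since the instance is deterministic, $\F_{t-1}$ is trivial and $\E[(\ell_{i,t}-\ell_{1,t})^2\mid\F_{t-1}] = b_t = \E[\ell_{i,t}-\ell_{1,t}\mid\F_{t-1}]$, so the $(1,1)$-Bernstein condition holds with $i^*=1$; it remains only to choose $(b_t)$ so that the pseudo-regret is large.

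By symmetry all suboptimal experts carry the same Hedge weight, so writing $B_t := \sum_{s\le t} b_s$ for their common cumulative loss, the total suboptimal weight at round $t$ is $W_t = \frac{(M-1)e^{-\eta_t B_{t-1}}}{1+(M-1)e^{-\eta_t B_{t-1}}}$, and since expert $1$ has loss $0$ throughout, $\pseudoregret_T = R_{1,T} = \sum_{t=1}^T b_t W_t$. The crucial point is that $W_t \ge \tfrac12$ whenever $(M-1)e^{-\eta_t B_{t-1}} \ge 1$, i.e. as long as $B_{t-1}$ stays below the ``budget'' $\beta_t := \frac{\log(M-1)}{\eta_t} = \frac{\log(M-1)}{c_0\sqrt{\log M}}\sqrt t$, which — because $\frac{\log(M-1)}{\sqrt{\log M}}$ can be made as close as we like to $\sqrt{\log M}$ by taking $M$ large — is at least $0.9\,\tfrac1{c_0}\sqrt{t\log M}$.

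I would then define $(b_t)$ greedily: set $b_t = 1$ exactly when doing so keeps $B_t \le \beta_{t+1}$, and $b_t = 0$ otherwise. An immediate induction gives the invariant $B_{t-1}\le\beta_t$ for all $t\le T$, so $W_t\ge\tfrac12$ at every round; and since the rule declines to increment $B$ only when $B$ has essentially caught up with the increasing budget, one shows $B_T \ge \min(T,\beta_T) - O(1)$. Hence
\[
\pseudoregret_T \;=\; \sum_{t=1}^T b_t W_t \;\ge\; \tfrac12 B_T \;\ge\; \tfrac12\bigl(\min(T,\beta_T) - O(1)\bigr) \;\ge\; \tfrac12\Bigl(0.9\min\bigl(T,\tfrac1{c_0}\sqrt{T\log M}\bigr) - O(1)\Bigr),
\]
which, for $M$ large enough (depending only on $c_0$), is at least $\tfrac13\min(T,\tfrac1{c_0}\sqrt{T\log M})$ as soon as $\min(T,\tfrac1{c_0}\sqrt{T\log M})$ exceeds an absolute constant; the remaining small values of $T$ are checked by hand — there $\beta_t$ never binds, so $b_t\equiv 1$, $W_t\ge\tfrac12$, and $\pseudoregret_T\ge T/2$, while the target is at most $T/3$.

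The step I expect to be most delicate is exactly this endgame: verifying that the greedy schedule saturates the budget, i.e. $B_T\ge\min(T,\beta_T)-O(1)$ — a short combinatorial argument resting on the concavity of $t\mapsto\sqrt t$, and the place where the two regimes $T\lesssim\log M/c_0^2$ (here $B$ hits the cap $T$, giving the ``$T$'' branch of the minimum) and $T\gtrsim\log M/c_0^2$ (here $B$ tracks $\beta_t$, giving the ``$\tfrac1{c_0}\sqrt{T\log M}$'' branch) get reconciled — together with choosing $M$ large enough relative to $c_0$ so that simultaneously $\frac{\log(M-1)}{\log M}$ is close to $1$, the weight bound $W_t\ge\tfrac12$ still holds at the very first rounds, and the additive $O(1)$ losses are absorbed into the constant $\tfrac13$ uniformly in $T$. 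Everything else — the Bernstein verification, the symmetry reduction, and the monotonicity of $x\mapsto x/(1+x)$ — is routine. (An essentially equivalent route replaces the deterministic schedule by i.i.d.\ Bernoulli$(\Delta)$ suboptimal losses with $\Delta\asymp\frac{\sqrt{\log M}}{c_0\sqrt T}$, trading the combinatorial placement for a Chernoff concentration bound on $B_{t-1}$.)
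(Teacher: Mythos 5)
Your proof is in the right spirit but takes a considerably harder route than necessary, and the detour introduces a restriction the theorem does not have. The paper's proof also uses a deterministic instance with a best expert of loss $0$ and symmetric suboptimal experts, but rather than using $\{0,1\}$-valued suboptimal losses scheduled by a greedy rule, it simply sets every suboptimal loss to the \emph{constant} value $\Delta := 1 \wedge c_0^{-1}\sqrt{(\log M)/T}$. Because losses live in $[0,1]$, not $\{0,1\}$, this is perfectly admissible, and the $(1,1)$-Bernstein condition is immediate from $\Delta^2 \le \Delta$. With this choice the pseudo-regret becomes $\Delta \sum_{t=1}^T \frac{(M-1)e^{-\eta_t\Delta(t-1)}}{1+(M-1)e^{-\eta_t\Delta(t-1)}}$, and the whole argument reduces to counting the rounds $t$ with $Me^{-c_0\Delta\sqrt{(t-1)\log M}}\ge 1$, which is a clean threshold in $t$ equal to $\min(\log M/(c_0^2\Delta^2),T)$; multiplying by $\Delta/3$ (via $x/(1+x)\ge 1/3$ for $x\ge 1/2$, as in your proof) gives the bound. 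No combinatorial scheduling, no ``budget'' invariant, no endgame estimate, and — crucially — no constraint on $M$.

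The constraint on $M$ in your version is a genuine issue. You require $M$ to be large relative to $c_0$ so that $\log(M-1)/\sqrt{\log M} \ge 0.9\sqrt{\log M}$ and so that the $O(1)$ losses from the greedy schedule are absorbed into the constant $1/3$. But $M$ is the number of experts of the Hedge problem, fixed before the theorem is invoked (the paper works under $M\ge 3$ throughout); the theorem claims the bound for every such $M$, not merely for $M$ larger than some function of $c_0$. Your argument thus proves a strictly weaker statement unless you remove this dependence, which the binary-loss construction does not obviously allow (the budget $\beta_t$ only grows by $\Theta(1/\sqrt{t})$ per step once $t\gtrsim \log M/c_0^2$, so the greedy $B_T$ genuinely lags $\beta_T$ by an amount controlled only in terms of $\log M$ and $c_0$ with constants you haven't pinned down). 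By contrast, allowing the suboptimal loss to be the non-binary constant $\Delta$ sidesteps all of this: the schedule is uniform, every round up to the threshold contributes exactly $\Delta/3$, and the bound holds for all $M\ge 2$. The observation you make at the end about replacing the schedule by i.i.d.\ Bernoulli$(\Delta)$ losses is closer in spirit to the paper, but even that is overkill — the concentration step is not needed once you realize you may simply use the deterministic value $\Delta$.
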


The proof of Theorem~\ref{thm:hedge-no-bernstein} is given in Section~\ref{sec:proof-lowerbound-no-bernstein}.
By contrast, it was shown by \citet{koolen2016combining}
(and implicitly used by \citealp{gaillard2014secondorder}) that
some adaptive algorithms with data-dependent regret bounds enjoy improved regret under the Bernstein condition.
For the sake of completeness, we state this fact in Proposition~\ref{prop:second-order-bernstein} below, which corresponds to~\citet[Theorem~2]{koolen2016combining}, but where the dependence on $B$ is made explicit. We also only provide a bound in expectation, which considerably simplifies the proof.
The proof of Proposition~\ref{prop:second-order-bernstein}, which uses the same ideas as \citet[Theorem~11]{gaillard2014secondorder}, is provided in Section~\ref{sec:proof-second-order-bernstein}.

\begin{proposition}
  \label{prop:second-order-bernstein}
  Consider an algorithm for the Hedge problem which satisfies the following regret bound: for 
  every $i\in \{ 1, \dots, M\}$\textup,
  \begin{equation}
    \label{eq:second-order-regret}
    R_{i,T} \leq C_1 \sqrt{(\log M) \sum_{t=1}^T (\wh \ell_t - \ell_{i,t})^2} + C_2 \log M
  \end{equation}
  where
  $C_1, C_2 >0$ are constants.
  Assume that the losses satisfy the $(\beta, B)$-Bernstein condition.
  Then, the pseudo-regret of the algorithm satisfies\textup:
  \begin{equation}
    \label{eq:regret-bernstein}
    \pseudoregret_T
    \leq C_3 (B \log M)^{\frac{1}{2-\beta}} T^{\frac{1-\beta}{2-\beta}} + C_4 \log M 
  \end{equation}
  where $C_3 = \max (1, 4C_1^2)$ and $C_4 = 2 C_2$.
\end{proposition}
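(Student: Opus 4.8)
The plan is to apply the assumed second-order regret bound \eqref{eq:second-order-regret} at the index $i = i^*$, control the empirical variance term $\sum_{t=1}^T (\wh\ell_t - \ell_{i^*,t})^2$ in expectation via the Bernstein condition, and then solve the resulting self-bounding inequality for $\pseudoregret_T$. Throughout, recall that $\pseudoregret_T = \E[R_{i^*,T}] = \sum_{t=1}^T r_t$ with $r_t := \E[\wh\ell_t - \ell_{i^*,t}] \geq 0$, where non-negativity follows because the Bernstein condition \eqref{eq:bernstein-condition} forces $\E[\ell_{i,t}-\ell_{i^*,t}\cond\F_{t-1}] \geq 0$ for all $i \neq i^*$, and hence $\E[\wh\ell_t - \ell_{i^*,t}\cond\F_{t-1}] = \sum_i v_{i,t}\,\E[\ell_{i,t}-\ell_{i^*,t}\cond\F_{t-1}] \geq 0$.

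\emph{Step 1 (variance bound).} Since $\bm v_t$ is a probability vector, $\wh\ell_t - \ell_{i^*,t} = \sum_i v_{i,t}(\ell_{i,t} - \ell_{i^*,t})$, so by convexity of $u \mapsto u^2$ and Jensen's inequality (pointwise), $(\wh\ell_t - \ell_{i^*,t})^2 \leq \sum_i v_{i,t}(\ell_{i,t} - \ell_{i^*,t})^2$. Taking the conditional expectation given $\F_{t-1}$, using that $\bm v_t$ is $\F_{t-1}$-measurable and invoking \eqref{eq:bernstein-condition},
\begin{equation*}
  \E\big[(\wh\ell_t - \ell_{i^*,t})^2 \cond \F_{t-1}\big]
  \leq \sum_i v_{i,t}\,\E\big[(\ell_{i,t}-\ell_{i^*,t})^2\cond\F_{t-1}\big]
  \leq B\sum_i v_{i,t}\,\E\big[\ell_{i,t}-\ell_{i^*,t}\cond\F_{t-1}\big]^{\beta}.
\end{equation*}
Since $\beta \in [0,1]$, $u \mapsto u^\beta$ is concave on $[0,\infty)$, so a further application of Jensen's inequality (with weights $v_{i,t}$) gives $\sum_i v_{i,t}\,\E[\ell_{i,t}-\ell_{i^*,t}\cond\F_{t-1}]^\beta \leq \E[\wh\ell_t - \ell_{i^*,t}\cond\F_{t-1}]^\beta$; taking expectations and using concavity once more, $\E[(\wh\ell_t - \ell_{i^*,t})^2] \leq B\, r_t^{\beta}$. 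Summing over $t$ and using the power-mean inequality $\sum_{t=1}^T r_t^\beta \leq T^{1-\beta}\big(\sum_{t=1}^T r_t\big)^\beta$ yields
\begin{equation*}
  \sum_{t=1}^T \E\big[(\wh\ell_t-\ell_{i^*,t})^2\big] \leq B\, T^{1-\beta}\, \pseudoregret_T^{\,\beta}.
\end{equation*}

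\emph{Step 2 (self-bounding inequality and its resolution).} Applying \eqref{eq:second-order-regret} with $i = i^*$, taking expectations, and using concavity of the square root (first to move the expectation inside, then to insert the bound from Step 1),
\begin{equation*}
  \pseudoregret_T
  \leq C_1\sqrt{(\log M)\sum_{t=1}^T\E\big[(\wh\ell_t-\ell_{i^*,t})^2\big]} + C_2\log M
  \leq C_1\sqrt{B\log M}\; T^{(1-\beta)/2}\, \pseudoregret_T^{\,\beta/2} + C_2\log M.
\end{equation*}
This has the form $x \leq a x^{\gamma} + b$ with $x = \pseudoregret_T$, $\gamma = \beta/2 \in [0,1)$, $a = C_1\sqrt{B\log M}\,T^{(1-\beta)/2}$ and $b = C_2\log M$; splitting according to whether $a x^\gamma \geq b$ or not gives $x \leq \max\big((2a)^{1/(1-\gamma)},\, 2b\big) \leq (2a)^{1/(1-\gamma)} + 2b$. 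With $1/(1-\gamma) = 2/(2-\beta)$ this reads $\pseudoregret_T \leq (2C_1)^{2/(2-\beta)}(B\log M)^{1/(2-\beta)}T^{(1-\beta)/(2-\beta)} + 2C_2\log M$. Finally, $2/(2-\beta) \in [1,2]$ for $\beta \in [0,1]$, so $(2C_1)^{2/(2-\beta)} \leq \max(1, 4C_1^2) = C_3$, which gives the stated bound with $C_4 = 2C_2$.

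\emph{Main obstacle.} There is no substantial difficulty: the argument is a careful chain of Jensen's inequality applied successively to $u^2$, $u^\beta$ and $\sqrt{\cdot}$, combined with the tower property and the $\F_{t-1}$-measurability of $\bm v_t$, followed by a routine resolution of a self-bounding inequality. The only points needing care are keeping the direction of each Jensen step straight (convex for $u^2$, concave for $u^\beta$ and $\sqrt{\cdot}$) and the constant bookkeeping in the last step.
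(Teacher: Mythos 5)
Your proof is correct and follows essentially the same route as the paper: bound $\E[(\wh\ell_t-\ell_{i^*,t})^2]\leq B\, r_t^\beta$ via the Bernstein condition and a chain of Jensen applications, plug into the expectation of the second-order bound at $i=i^*$, pass one more Jensen for $\sqrt{\cdot}$ and one for $u^\beta$, and resolve the resulting self-bounding inequality $x\leq a x^{\beta/2}+b$. Your explicit power-mean step $\sum_t r_t^\beta \leq T^{1-\beta}(\sum_t r_t)^\beta$ is exactly the paper's normalization-then-Jensen step, so the two arguments coincide up to bookkeeping.
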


The data-dependent regret bound~\eqref{eq:second-order-regret}, a ``second-order'' bound, is satisfied by adaptive algorithms such as Adapt-ML-Prod \citep{gaillard2014secondorder} and Squint \citep{koolen2015squint}.
A slightly different variant of second-order regret bounds, which depends on some cumulative variance of the losses across experts, has been considered by \citet{cesabianchi2007secondorder,derooij2014followtheleader}, and is achieved by Hedge algorithms with a data-dependent tuning of the learning rate. Second-order bounds refine so-called \emph{first-order} bounds \citep{cesabianchi1997doublingtrick,auer2002adaptive,cesabianchi2006plg}, which are adversarial regret bounds that scale as $O(\sqrt{L_T^* \log M} + \log M)$, where $L_T^*$ denotes the cumulative loss of the best expert.
  While first-order bounds may still scale as the worst-case $O(\sqrt{T \log M})$ rate in a typical stochastic instance (where the best expert has a positive expected loss), second-order algorithms are known to achieve constant $O((\log M) / \Delta)$ regret in the stochastic case with gap $\Delta$ \citep{gaillard2014secondorder,koolen2015squint}.

Theorem~\ref{thm:hedge-no-bernstein}, in light of Proposition~\ref{prop:second-order-bernstein}, clarifies where the advantage of second-order algorithms compared to Decreasing Hedge lies: unlike the latter, they can exploit Bernstein conditions on the losses.
The contrast is most apparent for Bernstein instances with $\beta = 1$.  
By Example~\ref{ex:bernstein-gap}, the existence of a gap $\Delta$ implies that the $(1,B)$-Bernstein condition holds with $B \leq \frac{1}{\Delta}$.
However, as shown by Example~\ref{ex:bernstein-without-gap}, $B$ can in fact be much smaller than $\Delta$, in which case the regret bound~\eqref{eq:regret-bernstein} satisfied by second-order algorithms, namely $O (B \log M)$, significantly improves over the upper bound of $O( (\log M)/\Delta)$ of Decreasing Hedge from Theorem~\ref{thm:hedge-stochastic}.
Theorem~\ref{thm:hedge-no-bernstein} provides an instance where the difference does occur, in the most pronounced case where $B =1$, so that second-order algorithms enjoy small $O (\log M)$ regret, while Decreasing Hedge suffers $\Theta (\sqrt{T \log M})$ regret.

\begin{remark}
  The advantage of larger learning rates on some stochastic instances may be understood intuitively as follows.
  Consider an instance with $B$ small but small gap $\Delta$.
  The learning rate of Decreasing Hedge is large enough that it can rule out bad experts (with large enough gap $\Delta_i$) at the optimal rate (\ie, at time $(\log M)/\Delta_i^2$).
  However, once these bad experts are ruled out, near-optimal experts (with small gap $\Delta_i$) are ruled out late (after $(\log M)/\Delta_i^2$ rounds).
  On the other hand, the Bernstein assumption entails that those experts are highly correlated with the best expert, the amount of noise on the relative losses of these near-optimal experts is small, so that a larger learning rate could be safely used and would enable to dismiss near-optimal experts sooner.
\end{remark}

Setting the Bernstein condition aside, we conclude by investigating the intrinsic limitations of Decreasing Hedge in the stochastic setting.
Indeed, it is natural to ask whether Decreasing Hedge can exploit some other regularity of a stochastic instance, apart from the gap $\Delta$.
Theorem~\ref{thm:hedge-characterize-gap} shows that this is in fact not the case.

\begin{theorem}
  \label{thm:hedge-characterize-gap}  
  For every \iid \textup(over time\textup) stochastic instance with a unique best expert 
  \begin{equation*}
     i^* = \argmin_{1 \leq i \leq M} \E [\ell_{i,t}],
  \end{equation*}
  the pseudo-regret of Decreasing Hedge \textup(with $c_0 \geq 1$\textup) satisfies
    \begin{equation*}
      \pseudoregret_T \geq \frac{1}{450 c_0^4 (\log M)^2 \Delta}
    \end{equation*}
  for $T \geq \frac{1}{4 \Delta^2}$\textup, where $\Delta := \inf_{i \neq i^*} \E [\ell_{i,t} - \ell_{i^*,t}]$.
\end{theorem}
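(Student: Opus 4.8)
Write $\Delta_i := \E[\ell_{i,1}-\ell_{i^*,1}] > 0$ for $i \neq i^*$, so $\Delta = \min_{i\neq i^*}\Delta_i$, and fix a suboptimal expert $i_0$ attaining this minimum (attained since $M < \infty$). Since $\bm v_t$ depends only on $\bm\ell_1,\dots,\bm\ell_{t-1}$ and the loss vectors are \iid, the pseudo-regret has the exact expression
\[
  \pseudoregret_T = \sum_{t=1}^{T}\sum_{i\neq i^*}\E[v_{i,t}]\,\Delta_i \;\geq\; \Delta\sum_{t=1}^{T}\E\big[1 - v_{i^*,t}\big],
\]
and in particular $\pseudoregret_T \geq \Delta(1-1/M)$ already from the first round. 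The plan is to show that the mass $1 - v_{i^*,t}$ that Hedge keeps on suboptimal experts has non-negligible expectation on a window of $\asymp 1/(c_0^2\Delta^2\log M)$ rounds. Keeping only $i_0$ in the denominator of \eqref{eq:hedge-algorithm} and using $x/(1+x) \geq \tfrac12\min(1,x)$ for $x\geq 0$, we get, with $D_{t-1} := L_{i_0,t-1} - L_{i^*,t-1}$ (a sum of $t-1$ \iid bounded variables of mean $(t-1)\Delta$),
\[
  1 - v_{i^*,t} \;\geq\; \tfrac12\min\!\big(1, e^{-\eta_t D_{t-1}}\big) \;\geq\; \tfrac12\,e^{-\eta_t a}\,\bm 1\{D_{t-1}\leq a\}\qquad\text{for every }a > 0 .
\]

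Take $a = (t-1)\Delta + \sigma\sqrt{t-1}$, where $\sigma^2$ is the variance of $\ell_{i_0,1}-\ell_{i^*,1}$. Then Cantelli's inequality gives $\P(D_{t-1}\leq a)\geq \tfrac12$, while (since $\eta_t = c_0\sqrt{(\log M)/t}$) one has $\eta_t(t-1)\Delta \leq c_0\Delta\sqrt{t\log M}$ and $\eta_t\sigma\sqrt{t-1}\leq c_0\sigma\sqrt{\log M}$, hence $\E[1-v_{i^*,t}] \geq \tfrac14\,e^{-c_0\sigma\sqrt{\log M}}\,e^{-c_0\Delta\sqrt{t\log M}}$. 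Summing over $1\leq t\leq T$, using $\sum_{t\geq 1}e^{-c\sqrt t}\gtrsim 1/c^2$ and the hypothesis $T\geq 1/(4\Delta^2)$ (which guarantees that this geometric-type sum is essentially complete), we obtain
\[
  \pseudoregret_T \;\gtrsim\; \frac{e^{-c_0\sigma\sqrt{\log M}}}{c_0^2\,\Delta\,\log M}\,,
\]
which already yields the theorem when $\sigma \lesssim (\log\log M)/(c_0\sqrt{\log M})$, since then $e^{-c_0\sigma\sqrt{\log M}}\gtrsim 1/\log M$.

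For larger noise the Cantelli choice of $a$ is too pessimistic: there the half-width $1/(2\eta_t) = \sqrt t/(2c_0\sqrt{\log M})$ is only a fraction $\asymp 1/(c_0\sigma\sqrt{\log M})$ of the standard deviation $\sigma\sqrt{t-1}$ of $D_{t-1}$, so Chebyshev gives nothing and one must invoke an \emph{anti-concentration} (local-CLT / Berry--Esseen) lower bound for sums of bounded \iid centred variables: $\P(|D_{t-1} - \E D_{t-1}|\leq \tfrac1{2\eta_t}) \gtrsim 1/(c_0\sigma\sqrt{\log M})$, valid once $t$ exceeds an absolute constant multiple of $c_0^2\log M$ (the normal-approximation error, of order $1/(\sigma\sqrt t)$, being then dominated by the main term). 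Restricting further to the window where $(t-1)\Delta \leq \tfrac1{2\eta_t}$, i.e.\ $t\lesssim 1/(c_0^2\Delta^2\log M)$, this forces $\P(D_{t-1}\leq 1/\eta_t)\gtrsim 1/(c_0\sigma\sqrt{\log M})$ and hence, taking $a = 1/\eta_t$ above, $\E[1 - v_{i^*,t}]\gtrsim 1/(c_0\sqrt{\log M})$; summing over this window of length $\asymp 1/(c_0^2\Delta^2\log M)$ gives $\pseudoregret_T\gtrsim 1/(c_0^3\Delta(\log M)^{3/2})$, which again suffices.

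The main obstacle is reconciling these regimes: the anti-concentration step needs $t$ above a fixed multiple of $c_0^2\log M$ while the useful window only reaches $\asymp 1/(c_0^2\Delta^2\log M)$, so one must check the two overlap for all sufficiently small $\Delta$ — and for the remaining ``moderate-$\Delta$'' range fall back on $\pseudoregret_T\geq\Delta(1-1/M)$, which is $\geq 1/(450\,c_0^4(\log M)^2\Delta)$ precisely when $\Delta\gtrsim 1/(c_0^2\log M)$. Carrying out this three-way split (small $\sigma$; large $\sigma$ with small $\Delta$; moderate $\Delta$) with all constants explicit is the delicate but routine part of the proof, and it is this patching, rather than any sharp phenomenon, that is responsible for the non-optimal exponents $c_0^4$ and $(\log M)^2$ appearing in the statement.
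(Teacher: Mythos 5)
Your overall strategy --- reduce to a single near-optimal expert $i_0$, identify a window of length $\asymp 1/(c_0^2\Delta^2\log M)$ on which the weight on suboptimal experts is non-negligible, and sum --- is the right skeleton and agrees with the paper. The bound $1 - v_{i^*,t} \geq \tfrac12\min(1, e^{-\eta_t D_{t-1}})$ and the Cantelli step in your first regime are both correct. However, as you candidly note, the proposal is a plan rather than a proof, and the hardest part is left undone.

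The genuine gap is the anti-concentration step in the large-$\sigma$ regime. The claim $\P(|D_{t-1} - \E D_{t-1}| \leq 1/(2\eta_t)) \gtrsim 1/(c_0\sigma\sqrt{\log M})$ once $t \gtrsim c_0^2\log M$ is \emph{not} a routine consequence of Berry--Esseen for arbitrary bounded variables: the Berry--Esseen error at each endpoint is $O(\rho_3/(\sigma^3\sqrt{t-1}))$, and for bounded variables $\rho_3/\sigma^3$ can be as large as $\Theta(1/\sigma)$, so dominating the error requires $t$ to exceed a \emph{distribution-dependent} threshold, not an absolute multiple of $c_0^2\log M$. For lattice-valued differences $\ell_{i_0,t}-\ell_{i^*,t}$ one additionally needs $1/\eta_t$ to exceed the lattice spacing, again a distribution-dependent condition. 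Patching these holes while keeping distribution-free constants is precisely where this approach becomes delicate, and you have not carried it out; nor have you verified that your three regimes actually cover all $(\sigma,\Delta)$ with the claimed constants.

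The paper sidesteps all of this with a much simpler observation: there is no need for anti-concentration, nor for probability $\Omega(1)$ at each round. It applies Hoeffding's \emph{upper-tail} bound to show $\P\big(D_{t-1} > (\log 2)/\eta_t\big) \leq e^{-a^2/8}$ with $a = (\log 2)/(c_0\sqrt{\log M})$, for all $t \leq 1 + a^2/(4\Delta^2)$ (so that the mean $(t-1)\Delta$ does not eat the margin). Since $a \leq \log 2 < 1$, this failure probability is close to $1$, but the \emph{complement} still satisfies $1 - e^{-a^2/8} \geq (1-e^{-1/8})a^2 \asymp 1/(c_0^2\log M)$. Multiplying this per-round contribution by the window length $a^2/(4\Delta^2) \asymp 1/(c_0^2\Delta^2\log M)$ and by $\Delta/3$ yields $\asymp a^4/\Delta \asymp 1/(c_0^4\Delta(\log M)^2)$ directly, with no case split and no local CLT. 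In short: you were trying to make the per-round probability large and paying for it with tools that do not transfer cleanly to arbitrary bounded laws; the paper is content with a per-round probability of order $1/(c_0^2\log M)$, which the standard Hoeffding bound already delivers ``for free'' as its failure probability. This is also exactly why the stated rate has the $(\log M)^2$ and $c_0^4$ factors, rather than the sharper $(\log M)^{3/2}$ and $c_0^3$ your anti-concentration heuristic would predict.
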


Theorem~\ref{thm:hedge-characterize-gap} shows (together with the upper bound of Theorem~\ref{thm:hedge-stochastic}) that the eventual regret of Decreasing Hedge on \emph{any} stochastic instance is determined by the sub-optimality gap $\Delta$, and scales (up to a $\log^3 M$ factor, depending on the number of near-optimal experts) as $\Theta (\frac{1}{\Delta})$.
This characterizes the behavior of Decreasing Hedge on any stochastic instance.

\section{Experiments}
\label{sec:experiments}

In this section, we illustrate our theoretical results by numerical experiments that compare the behavior of various Hedge algorithms in the stochastic regime.

\paragraph{Algorithms.} We consider the following algorithms: \texttt{hedge} is Decreasing Hedge with the default learning rates $\eta_t = 2\sqrt{\log (M) / t}$, \texttt{hedge\_constant} is Constant Hedge with constant learning rate  $\eta_t = \sqrt{8 \log (M) / T}$, \texttt{hedge\_doubling} is Hedge with doubling trick with $c_0 = \sqrt{8}$, \texttt{adahedge} is the AdaHedge algorithm from \citet{derooij2014followtheleader}, which is a variant of the Hedge algorithm with a data-dependent tuning of the learning rate $\eta_t$ (based on $\bm \ell_1, \dots, \bm \ell_{t-1}$).
As shown in the note \citet{blog}, AdaHedge also benefits from Bernstein conditions.
A related algorithm, namely Hedge with second-order tuning of the learning rate \citep{cesabianchi2007secondorder}, performed similarly to AdaHedge on the examples considered below, and was therefore not included. \texttt{FTL} is Follow-the-Leader \citep{cesabianchi2006plg} which puts all mass on the expert with the smallest loss (breaking ties randomly).
While FTL serves as a benchmark in the stochastic setting, unlike the other algorithms it lacks any guarantee in the adversarial regime, where its worst-case regret is \emph{linear} in $T$.

\paragraph{Results.}

We report in Figure~\ref{fig:experiments} the cumulative regrets of the considered algorithms in four examples.
The results for the stochastic instances (a), (b) and (c) described below are averaged over $50$ trials.

\medskip
\noindent
\emph{\textup(a\textup) Stochastic instance with a gap.}
This is the standard instance considered in this paper.
The losses are drawn independently from Bernoulli distributions (one of parameter $0.3$, $2$ of parameter $0.4$ and $7$ of parameter $0.5$, so that $M=10$ and $\Delta = 0.1$).
The results of Figure~\ref{fig:1a} confirm our theoretical results: Decreasing Hedge achieves a small, constant regret which is close to that of AdaHedge and FTL, while Constant Hedge and Hedge with doubling trick suffer a larger regret of order $\sqrt{T}$ (note that, although the expected regret of Constant Hedge converges in this case, the value of this limit depends on its learning rate and hence on $T$).

\medskip
\noindent
\emph{\textup(b\textup) ``Hard'' stochastic instance.}
This example has a zero gap $\Delta = 0$ between the two leading experts and $M=10$, which makes it ``hard'' from the standpoint of Theorem~\ref{thm:hedge-stochastic} (which no longer applies in this limit case).
The losses are drawn from independent Bernoulli distributions, of parameters $0.5$ for the $2$ leading experts, and $0.7$ for the $8$ remaining ones.
Although all algorithms suffer an unavoidable $\Theta (\sqrt{T})$ regret due to pure noise, Decreasing Hedge, AdaHedge and FTL achieve better regret than the two conservative Hedge variants (Figure~\ref{fig:1b}).
This is due to the fact that for the former algorithms, the weights of suboptimal experts decrease quickly and only induce a constant regret.

\medskip
\noindent
\emph{\textup(c\textup) Small loss for the best expert.}
In this experiment, we illustrate one advantage of adaptive Hedge algorithms such as AdaHedge over Decreasing Hedge, namely the fact that they admit improved regret bounds when the leading expert has small loss. We considered in this experiment $M = 10$, $\Delta = 0.04$ and the leading expert is $\betadist(0.04,0.96)$, then $4$ $\betadist(0.08,0.92)$, then $5$ $\betadist(0.5, 0.5)$.

\medskip
\noindent
\emph{\textup(d\textup) Adversarial with a gap instance.}
This simple instance is not random, and satisfies the assumptions of Theorem~\ref{thm:adv-gap}.
It is defined by $M=3$, $\Delta = 0.04$, $\ell_{3, t} = \frac{3}{4}$ for $t \geq 1$, $(\ell_{1, t}, \ell_{2,t}) = (\frac{1}{2}, 0)$ if $t=1$, $(0, 1)$ if $t \geq 80$ or if $t$ is even, and $(1, 0)$ otherwise.
FTL suffers linear regret in the first phase, while Constant Hedge and Hedge with doubling trick suffer $\Theta (\sqrt{T})$ during the second phase.

\begin{figure}
\centering
\begin{subfigure}[b]{.48\linewidth}
  \centering
  \includegraphics[width=\linewidth]{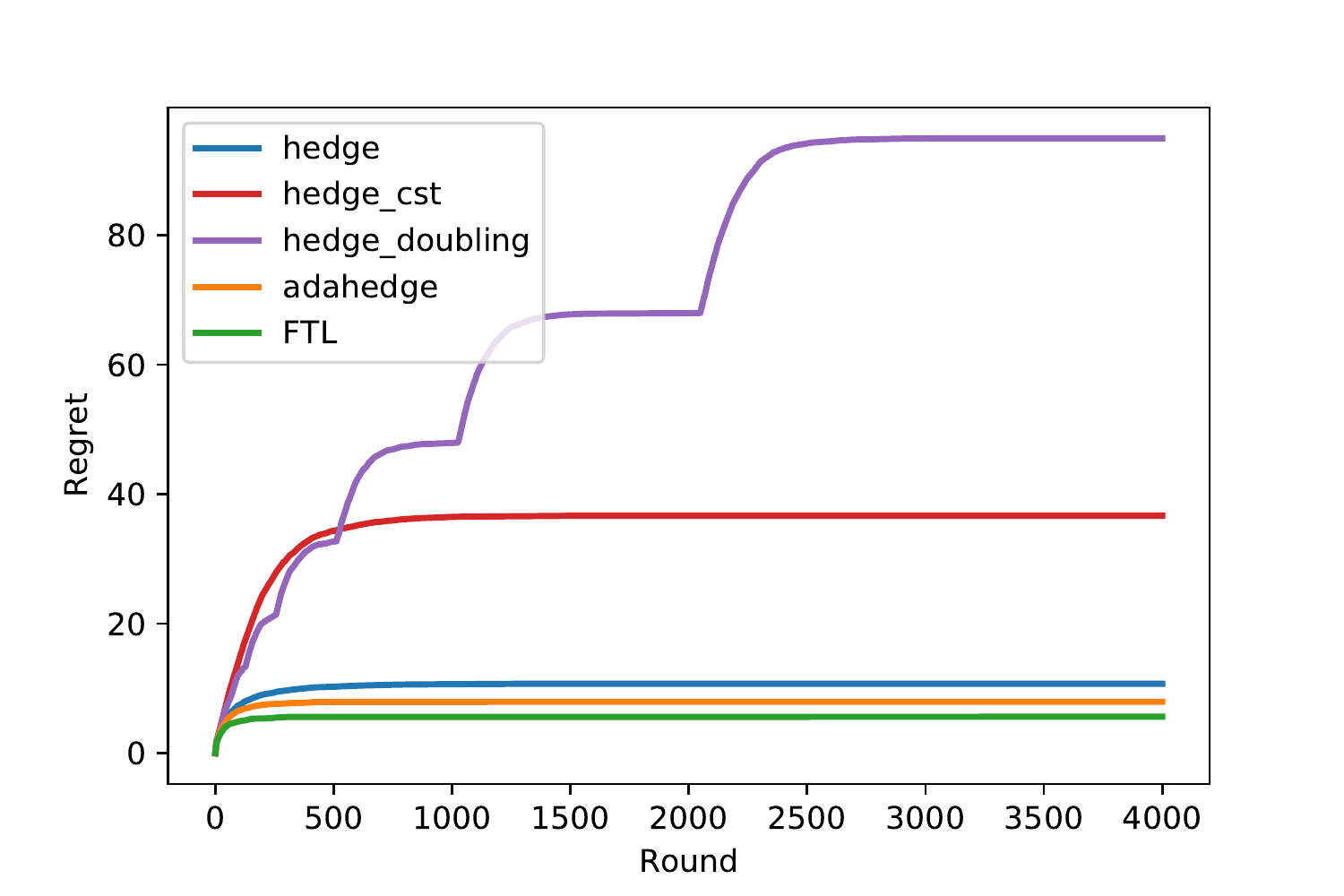}
  \caption{
    }
  \label{fig:1a}
\end{subfigure}%
\begin{subfigure}[b]{.48\linewidth}
  \centering
  \includegraphics[width=\linewidth] {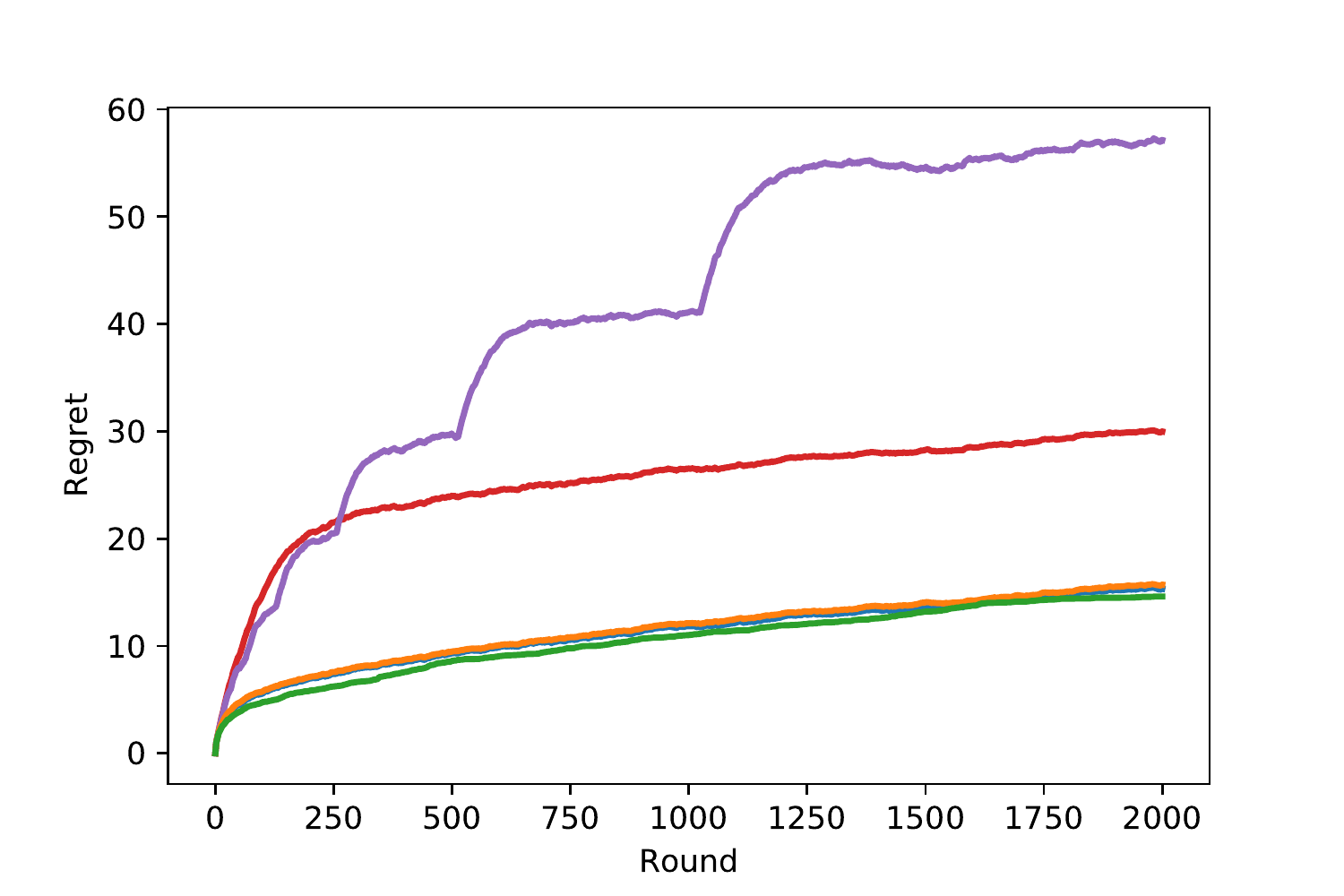}
  \caption{
  }\label{fig:1b}
\end{subfigure} \\ %
\begin{subfigure}[b]{.48\linewidth}
  \centering
  \includegraphics[width=\linewidth] {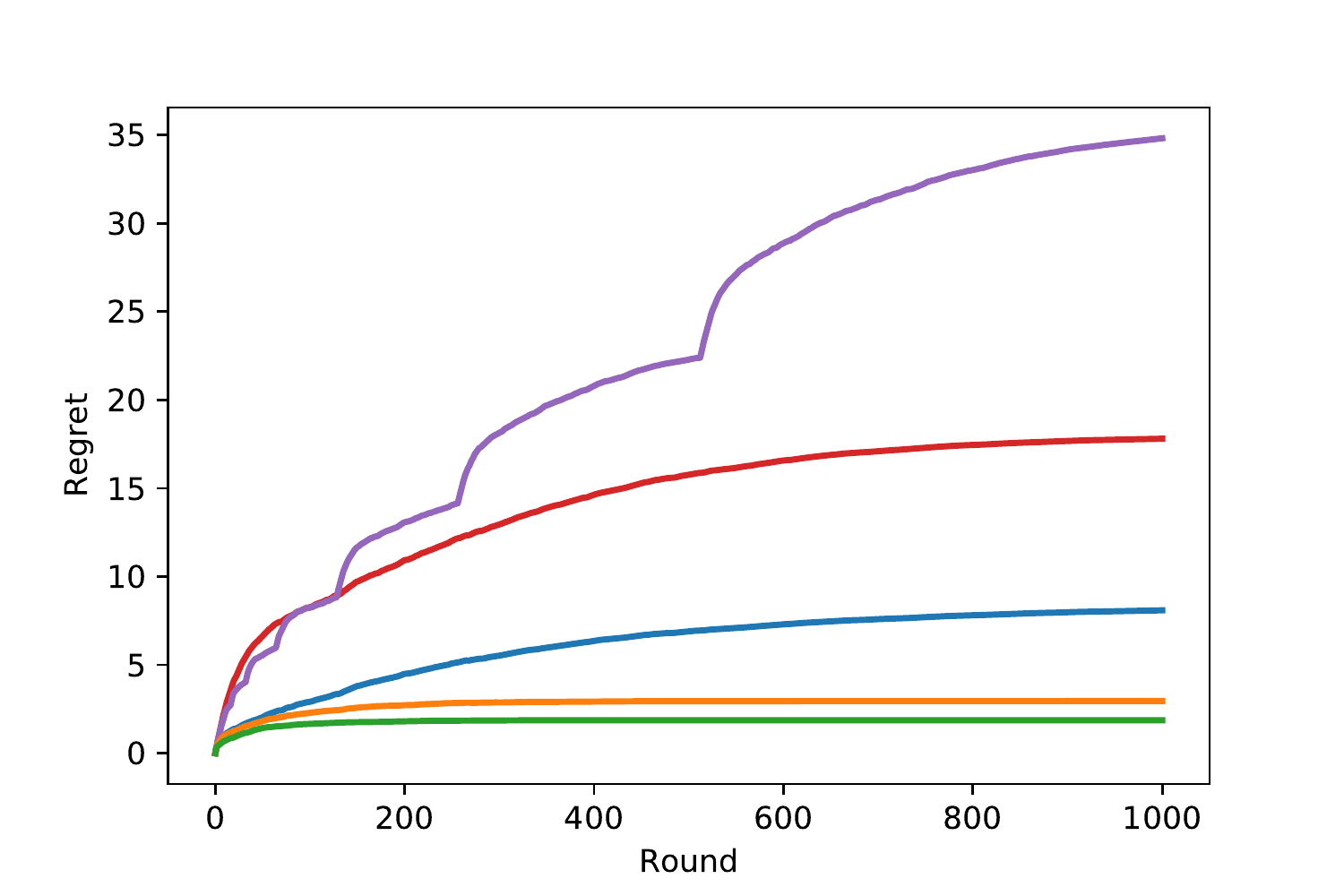}
  \caption{
  }\label{fig:1c}
\end{subfigure}%
\begin{subfigure}[b]{.48\linewidth}
  \includegraphics[width=\linewidth] {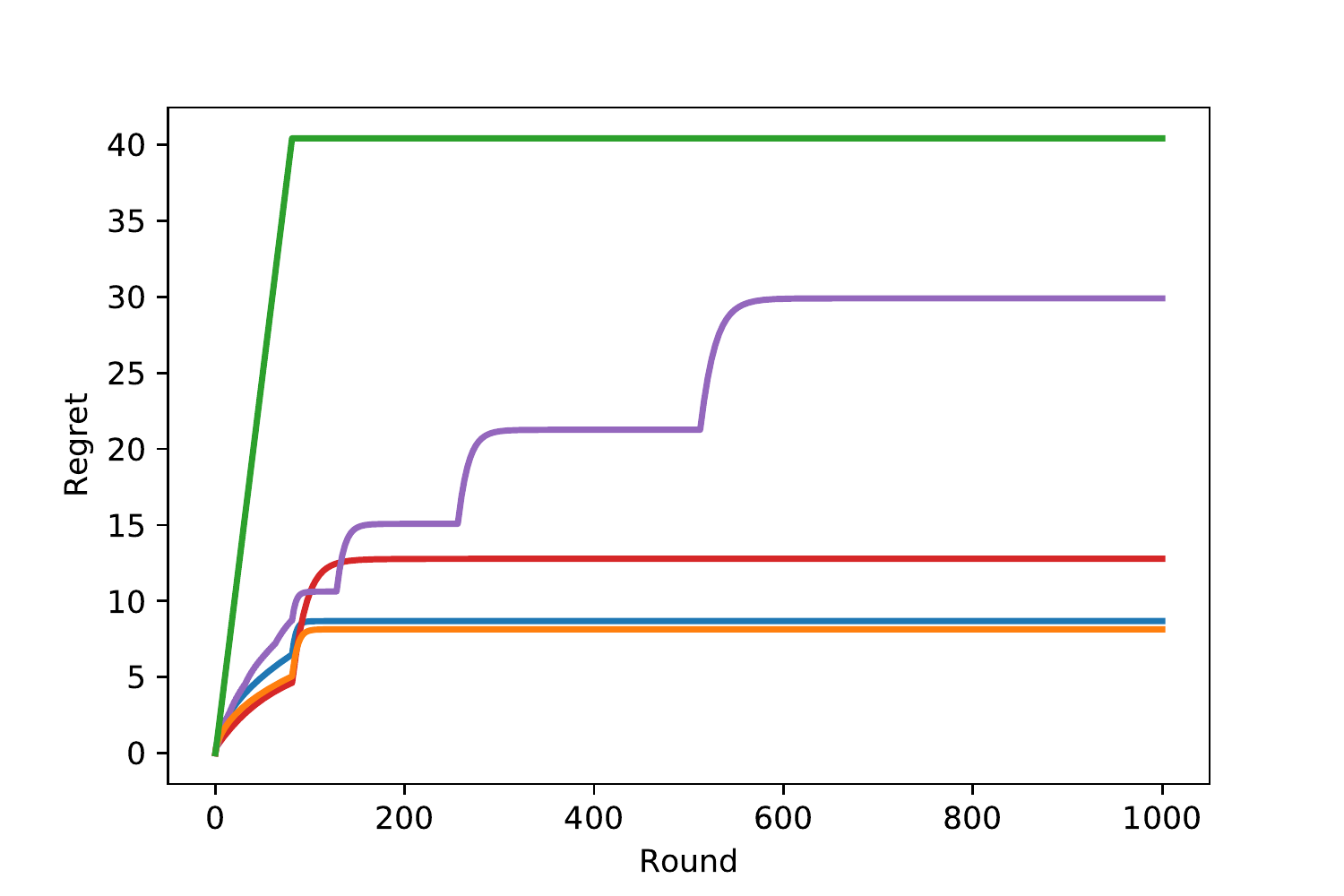}  
  \caption{
  }\label{fig:1d}
\end{subfigure}
\caption{Cumulative regret of Hedge algorithms on four examples, see text for a precise description and discussion about the results. (a) Stochastic instance with a gap; (b) ``Hard'' stochastic instance; (c) Small loss for the best expert; (d) Adversarial with a gap instance.}
\label{fig:experiments}
\end{figure}

\section{Conclusion} 
\label{sec:conclusion}

In this article, we carried the regret analysis of the standard exponential weights (Hedge) algorithm in the stochastic expert setting, closing a gap in the existing literature.
Our analysis reveals that, despite being tuned for the worst-case adversarial setting and lacking any adaptive tuning of the learning rate, Decreasing Hedge achieves optimal regret in the stochastic setting.
This property also enables one to distinguish it qualitatively from other variants including the one with fixed (horizon-dependent) learning rate or the one with doubling trick, which both fail to adapt to gaps in the losses.
To the best of our knowledge, this is the first result that shows the superiority of the decreasing learning rate over the doubling trick.
In addition, it suggests that, even for a fixed time horizon $T$, the decreasing learning rate tuning should be favored over the constant one.

Finally, we showed that the regret of Decreasing Hedge on any stochastic instance is essentially characterized by the sub-optimality gap $\Delta$.
This shows that adaptive algorithms, including algorithms achieving second-order regret bounds, can actually outperform Decreasing Hedge on some stochastic instances that exhibit a more refined form of ``easiness''.

\paragraph{A link with stochastic optimization.} 
\label{par:connection_with_stochastic_optimization}

Our results have a similar flavor to a well-known result \citep{moulines2011nonasymptotic} about stochastic optimization: stochastic gradient descent (SGD) with learning rate $\eta_t \propto 1 / \sqrt{t}$ (which is tuned for the convex case but not for the non-strongly convex case) and Polyak-Ruppert averaging achieves a fast $O(1 / (\mu t))$ excess risk rate for $\mu$-strongly convex problems, without the knowledge of $\mu$.
However, this link stops here since the two results are of a significantly different nature: the $O(1 / (\mu t))$ rate is satisfied only by SGD with Polyak-Ruppert averaging, and it does not come from a regret bound;
even in the $\mu$-strongly convex case, it can be seen that SGD with step-size $\eta_t \propto 1 / \sqrt{t}$ suffers a $\Theta (\sqrt{t})$ regret.
In fact, the opposite phenomenon occurs: in stochastic optimization, SGD uses a \emph{larger} $\Theta(1/\sqrt{t})$ step-size than the $\Theta(1/(\mu t))$ step size which exploits the knowledge of strong convexity, but the effect of this larger step-size is balanced by the averaging.
By contrast, in the expert setting, Hedge uses a \emph{smaller} $\Theta(\sqrt{(\log M)/t})$ learning rate than the constant, large enough learning rate which exploits the knowledge of the stochastic nature of the problem.

\paragraph{Acknowledgments.} 

The authors wish to warmly thank all four Anonymous Reviewers for their helpful feedback and insights on this work.
In particular, the proof of Proposition~\ref{prop:lowerbound-gap} was proposed by an Anonymous Referee, which allowed to shorten our initial proof.

\section{Proofs}
\label{sec:proofs}

We now provide the proofs of the results from the previous Sections, by order of appearance in the text.

\subsection{Proof of Theorem~\ref{thm:hedge-stochastic}}
\label{sec:proof-theorem-1}

Let $t_0 = \left\lceil \frac{8\log M}{\Delta^2} \right\rceil$, so that $\sqrt{t_0} \leq \sqrt{1 + \frac{8 \log M}{\Delta^2}} \leq 1 + \frac{\sqrt{8 \log M}}{\Delta}$ (since $\sqrt{a+b} \leq \sqrt{a} + \sqrt{b}$ for $a,b \geq 0$).
The worst-case regret bound of Hedge (Proposition~\ref{prop:hedge-adversarial}) shows that for $1 \leq T \leq t_0$:
\begin{equation}
  R_{i^*,T}
  \leq \sqrt{T \log M} 
  \leq \sqrt{t_0 \log M} 
  \leq \sqrt{\log M} + \frac{2 \sqrt{2} \log M}{\Delta}
  \leq \frac{4 \log M}{\Delta}
  \label{eq:hedge-stochastic-proof-1}
\end{equation}
(since $\log M \geq 1$ as $M \geq 3$, $\Delta \leq 1$ and $2 \sqrt{2} \leq 3$),
which establishes~\eqref{eq:regret-stochastic-exp} for $T \leq t_0$.
In order to prove~\eqref{eq:regret-stochastic-exp} for $T \geq t_0 +1$, we start by decomposing the regret with respect to $i^*$ as
\begin{equation}
\label{eq:hedge-stochastic-proof-2}
R_{i^*,T}
= \wh L_{T} - L_{i^*, T}
= \wh L_{t_0} - L_{i^*, t_0} + \sum_{t=t_0+1}^T (\wh \ell_t - \ell_{i^*, t})
\, .
\end{equation}
Since $\wh L_{t_0} - L_{i^*, t_0} \leq R_{t_0}$ is controlled by~\eqref{eq:hedge-stochastic-proof-1}, it remains to upper bound the second term in~\eqref{eq:hedge-stochastic-proof-2}.
First, for every $t \geq t_0 +1$,
\begin{equation}
  \label{eq:instant-regret}
  \wh \ell_t - \ell_{i^*,t}
  = \sum_{i \neq i^*} v_{i,t} (\ell_{i,t} - \ell_{i^*,t})
  \, .
\end{equation}
Since $\bm \ell_t$ is independent of $\bm v_t$ (which is $\sigma (\bm \ell_1, \dots, \bm \ell_{t-1})$-measurable),
taking the expectation in~\eqref{eq:instant-regret} yields, denoting $\Delta_i = \E [\ell_{i,t} - \ell_{i^*,t}]$,
\begin{equation}
  \label{eq:instant-expected-regret}
  \E [\wh \ell_t  - \ell_{i^*,t}]
  = \sum_{i \neq i^*} \Delta_i \E [v_{i,t}]
  \, .
\end{equation}
First, for every $i \neq i^*$,
applying Hoeffding's inequality to the \iid centered variables $Z_{i,t} := - \ell_{i,t} + \ell_{i^*,t} + \Delta_i$, which belong to $[-1 + \Delta_i, 1 + \Delta_i]$, yields
\begin{align}
  \label{eq:stochastic-proof-hoeffding}
  \P \left( L_{i,t-1} - L_{i^*,t-1} < \frac{\Delta_i (t-1)}{2} \right)
  &= \P \left( \sum_{s=1}^{t-1} Z_{i,s} > \frac{\Delta_i (t-1)}{2} \right) \nonumber \\
  &\leq e^{ - \frac{t-1}{2} (\Delta_i/2)^2} \nonumber \\
  &= e^{ - {(t-1) \Delta_i^2}/{8}}
    \, .
\end{align}
On the other hand, if $L_{i,t-1} - L_{i^*,t-1} \geq \Delta_i (t-1) /2$, then 
\begin{align}
  \label{eq:hedge-stochastic-proof-3}
  v_{i,t}
  &= \frac{e^{- \eta_t (L_{i,t-1} - L_{i^*,t-1})}}{1 + \sum_{j \neq i^*} e^{- \eta_t (L_{j,t-1} - L_{i^*,t-1})}} \nonumber \\
  &\leq e^{- 2 \sqrt{(\log M) / t} \times \Delta_i (t-1) / 2} \nonumber \\
  &\leq e^{- \Delta_i \sqrt{(t-1) (\log M)/2}}
\end{align}
since $t \leq 2(t-1)$.
It follows from~\eqref{eq:hedge-stochastic-proof-3} and~\eqref{eq:stochastic-proof-hoeffding} that, for $t \geq t_0+1 \geq 2$,
\begin{align}
  \label{eq:hedge-stochastic-two-terms}
  \E [v_{i,t}]
  &\leq \P \left( L_{i,t-1} - L_{i^*,t-1} > \frac{\Delta_i (t-1)}{2} \right) + e^{- \Delta_i \sqrt{(t-1)(\log M)/2}} \nonumber \\
  &\leq e^{- (t-1) \Delta_i^2 / 8} + e^{- \Delta_i \sqrt{(t-1)(\log M)/2}}
   \, .
\end{align}
Now, a simple analysis of functions shows that the functions $f_1 (u) = u e^{-u}$ and $f_2 (u) = u e^{-u^2/2}$ are decreasing on $[1, + \infty)$.
Since $\Delta_i \geq \Delta$, this entails that
\begin{equation}
  \label{eq:hedge-stochastic-proof-5}
  \Delta_i e^{- (t-1) \Delta_i^2 / 8}
  = \frac{2}{\sqrt{t-1}} f_2 \left( \frac{\sqrt{t-1} \Delta_i}{2} \right)
  \leq \frac{2}{\sqrt{t-1}} f_2 \left( \frac{\sqrt{t-1} \Delta}{2} \right)
  = \Delta e^{- (t-1) \Delta^2 / 8}
\end{equation}
provided that $\frac{\sqrt{t-1} \Delta}{2} \geq 1$, \ie $t \geq 1 + \frac{4}{\Delta^2}$, which is the case since $t \geq t_0 +1 \geq 1 + \frac{8 \log M}{\Delta^2}$.
Likewise,
\begin{equation}
  \label{eq:hedge-stochastic-proof-6}
  \Delta_i e^{- \Delta_i \sqrt{(t-1) (\log M) /2}}
  \leq \Delta e^{- \Delta \sqrt{(t-1) (\log M) /2}}
\end{equation}
if $\Delta \sqrt{(t-1) (\log M) /2} \geq 1$, \ie $t \geq 1+ \frac{2}{(\log M) \Delta^2}$, which is ensured by $t \geq t_0 +1$.
It follows from~\eqref{eq:instant-expected-regret}, \eqref{eq:hedge-stochastic-two-terms}, \eqref{eq:hedge-stochastic-proof-5} and~\eqref{eq:hedge-stochastic-proof-6} that for every $t \geq t_0 + 1$:
\begin{align}
  \label{eq:hedge-stochastic-proof-simult}
  \E [\wh \ell_t - \ell_{i^*,t}]
  &\leq M \Delta e^{- (t-1) \Delta^2 / 8} + M \Delta e^{- \Delta \sqrt{(t-1) (\log M) /2}} \nonumber \\
  &= \big( M e^{- t_0 \Delta^2 / 8} \big) \big( \Delta e^{- (t-t_0- 1) \Delta^2 / 8} \big) + \big( M e^{- \Delta \sqrt{(t-1) (\log M) /8}} \big) \big( \Delta e^{- \Delta \sqrt{(t-1) (\log M) /8}} \big) \nonumber \\
  &\leq \Delta e^{- (t-t_0- 1) \Delta^2 / 8} + \Delta e^{- \Delta \sqrt{(t-1)/8}}
\end{align}
where inequality~\eqref{eq:hedge-stochastic-proof-simult} comes from the bound $M e^{-t_0 \Delta^2 /8} \leq 1$ (since $t_0 \geq \frac{8 \log M}{\Delta^2}$) and from the fact that $M e^{- \Delta \sqrt{(t-1) (\log M) /8}} \leq 1$ amounts to $t \geq 1 + \frac{8 \log M}{\Delta^2}$, that is, to $t \geq t_0 +1$.
Summing inequality~\eqref{eq:hedge-stochastic-proof-simult} yields, for every $T \geq t_0+1$,
\begin{align}  
  \E [ \sum_{t = t_0 +1}^T (\ell_{t} - \ell_{i^*,t}) ]
  &\leq \sum_{t=t_0+1}^T \left\{ \Delta e^{- (t-t_0- 1) \Delta^2 / 8} + \Delta e^{- \Delta \sqrt{(t-1)/8}} \right\}  \nonumber \\
  &\leq \Delta \sum_{t \geq 0} e^{-t \Delta^2/8} + \Delta \sum_{t \geq 1} e^{- (\Delta / \sqrt{8}) \sqrt{t}} \nonumber \\
  &\leq \Delta \left( 1 + \frac{8}{\Delta^2} \right) + \Delta \times \frac{2}{(\Delta / \sqrt{8})^2} \label{eq:hedge-stochastic-proof-7} \\
  &
  \leq \frac{25}{\Delta}
\end{align}
where inequality~\eqref{eq:hedge-stochastic-proof-7} comes from Lemma~\ref{lem:tail} below.
Finally, combining inequalities~\eqref{eq:hedge-stochastic-proof-1} and~\eqref{eq:hedge-stochastic-proof-7} yields the pseudo-regret bound $\pseudoregret_T \leq \frac{4 \log M + 25}{\Delta}$.

\begin{lemma}
  \label{lem:tail}
  For every $\alpha > 0$,
  \begin{align}
    \label{eq:tail-bound}
    \sum_{t \geq 1} e^{-\alpha t} &\leq \frac{1}{\alpha} \\
    \label{eq:tail-bound-sqrt}
    \sum_{t \geq 1} e^{- \alpha \sqrt{t}} &\leq \frac{2}{\alpha^2} \, .
  \end{align}
\end{lemma}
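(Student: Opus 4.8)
The plan is to prove the two bounds separately, both by comparing the sums to integrals. For \eqref{eq:tail-bound}, the cleanest route is the geometric series: since $\sum_{t \geq 1} e^{-\alpha t} = e^{-\alpha}/(1 - e^{-\alpha})$, it suffices to check that $e^{-\alpha}/(1-e^{-\alpha}) \leq 1/\alpha$, i.e.\ that $\alpha e^{-\alpha} \leq 1 - e^{-\alpha}$, i.e.\ that $1 - (1+\alpha)e^{-\alpha} \geq 0$ for $\alpha > 0$; this follows since the left-hand side vanishes at $\alpha = 0$ and has derivative $\alpha e^{-\alpha} \geq 0$. Alternatively, and more in the spirit of what is needed for the second bound, one simply notes that $t \mapsto e^{-\alpha t}$ is decreasing, so $\sum_{t \geq 1} e^{-\alpha t} \leq \int_0^\infty e^{-\alpha t}\,\di t = 1/\alpha$.

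For \eqref{eq:tail-bound-sqrt}, I would again use the integral comparison: the function $t \mapsto e^{-\alpha \sqrt{t}}$ is decreasing on $[0, \infty)$, so
\begin{equation*}
  \sum_{t \geq 1} e^{-\alpha \sqrt{t}} \leq \int_0^\infty e^{-\alpha \sqrt{t}} \, \di t.
\end{equation*}
Then the substitution $u = \sqrt{t}$, $\di t = 2u \, \di u$, turns this into $\int_0^\infty 2 u \, e^{-\alpha u} \, \di u = 2/\alpha^2$ (a standard Gamma integral, or one integration by parts). This gives exactly the claimed constant $2/\alpha^2$.

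The only thing to be slightly careful about is the monotone-decreasing step in the integral comparison: for a decreasing nonnegative function $f$ on $[0,\infty)$ one has $f(t) \leq \int_{t-1}^{t} f(s)\,\di s$ for each integer $t \geq 1$, so summing over $t \geq 1$ gives $\sum_{t \geq 1} f(t) \leq \int_0^\infty f(s)\,\di s$. Applying this with $f(t) = e^{-\alpha t}$ and $f(t) = e^{-\alpha\sqrt{t}}$ respectively yields both inequalities. There is no real obstacle here — this is a routine estimate — so the "hard part" is merely bookkeeping the change of variables in the second integral; everything else is immediate.
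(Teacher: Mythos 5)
Your proof is correct and takes essentially the same route as the paper: both bounds follow from comparing the sum of a decreasing function to the corresponding integral over $[0,\infty)$, with the change of variables $u = \sqrt{t}$ (equivalently $u = \alpha\sqrt{t}$) turning the second integral into a Gamma integral equal to $2/\alpha^2$. The geometric-series remark you include for the first bound is a fine alternative but not what the paper uses.
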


\begin{proof}
  Since the functions $t \mapsto e^{-\alpha {t}}$ and $t \mapsto e^{-\alpha \sqrt{t}}$ are decreasing on $\R^+$, we have
  \begin{align*}
    &\sum_{t \geq 1} e^{-\alpha t}
    \leq \int_0^{\infty} e^{-\alpha t} \di t
    = \frac{1}{\alpha}
    \, 
    \\
  &\sum_{t \geq 1} e^{-\alpha \sqrt{t}}
  \leq \int_{0}^{+\infty} e^{-\alpha \sqrt{t}} \di t
  \underset{u = \alpha \sqrt t}{=} \frac{2}{\alpha^2}
  \int_{0}^{+\infty} u e^{-u} \di u
  = \frac{2}{\alpha^2}.
  \end{align*}
\end{proof}

\begin{remark}
  \label{rem:pseudoregret}
  While the upper bound of Theorem~\ref{thm:hedge-stochastic} is stated for the pseudo-regret $\pseudoregret_T$, a similar upper bound holds for the expected regret $\E [R_T]$.
  Indeed, under the assumptions of Theorem~\ref{thm:hedge-stochastic}, for every $T \geq \frac{4 \log M}{\Delta^2}$, we have
  $\E [R_T] \leq \pseudoregret_T + \frac{1.1}{\Delta}$.
\end{remark}

\begin{proof}
  Note that $\E [R_T] - \pseudoregret_T = \E [L_{i^*,T} - \min_{1\leq i \leq T} L_{i,T}]$.
  For every $a \geq 0$, Hoeffding's inequality (applied to the \iid centered variables $\ell_{i^*,t} - \ell_{i,t} + \Delta_i \in [-1 + \Delta_i, 1 + \Delta_i]$, $1\leq t \leq T$) entails 
  \begin{align}
    \label{eq:proof-pseudo-regret-1}
    \P \left( L_{i^*,T} - \min_{1\leq i \leq T} L_{i,T} \geq a \right)
    &\leq \sum_{i \neq i^*} \P \left( L_{i^*,T} - L_{i,T} + \Delta_i T \geq \Delta_i T + a \right) \nonumber \\
    &\leq \sum_{i \neq i^*} e^{- (\Delta_i T + a)^2 / (2T)} \\
    &\leq M e^{- T \Delta^2/2} e^{-a^2/(2T)} \nonumber \\
    &\leq e^{-T \Delta^2/4} e^{-a^2/(2T)}
      \label{eq:proof-pseudo-regret-2}
      \, ,
  \end{align}
  where inequality~\eqref{eq:proof-pseudo-regret-2} comes from the fact that $M e^{-T \Delta^2/4} \leq 1$ since $T \geq \frac{4 \log M}{\Delta^2}$.
  Since the random variable $L_{i^*,T} - \min_{1\leq i \leq T} L_{i,T}$ is nonnegative, this implies that
  \begin{align}    
    \E \left[ L_{i^*,T} - \min_{1\leq i \leq T} L_{i,T} \right]
    &= \int_0^{\infty} \P \left( L_{i^*,T} - \min_{1\leq i \leq T} L_{i,T} \geq a \right) \di a \nonumber \\
    &\leq e^{-T \Delta^2/4} \int_0^{\infty} e^{-a^2/(2T)} \di a \nonumber \\
    &= \sqrt{\frac{\pi}{2}} \cdot \sqrt{T} e^{-T \Delta^2/4} \nonumber \\
    &= \frac{\sqrt{\pi}}{\Delta} \big[ \Delta \sqrt{T/2} \cdot e^{- (\Delta \sqrt{T/2})^2/2} \big] \nonumber \\
    &\leq \frac{\sqrt{\pi/e}}{\Delta}
      \label{eq:proof-pseudo-regret-3}
  \end{align}
  where inequality~\eqref{eq:proof-pseudo-regret-3} comes from the fact that the function $u \mapsto u e^{-u^2/2}$ attains its maximum on $\R^+$ at $u=1$.
  This concludes the proof, since $\sqrt{\pi/e}\leq 1.1$.
\end{proof}

\subsection{Proof of Proposition~\ref{prop:lowerbound-gap}}
\label{sec:proof-lowerbound-gap}

Fix $M$, $\Delta$ and $T$ as in Proposition~\ref{prop:lowerbound-gap}.
For $i^* \in \{ 1, \dots, M\}$, denote $\P_{i^*}$ the following distribution on $[0, 1]^{M \times T}$: if $(\ell_{i,t})_{1\leq i \leq M, 1\leq t \leq T} \sim \P_{i^*}$, then the variables $\ell_{i,t}$ are independent Bernoulli variables, of parameter $\frac{1}{2} - \Delta$ if $i = i^*$ and $\frac{1}{2}$ otherwise; also, denote by $\E_{i^*}$ the expectation with respect to $\P_{i^*}$.
Let $\A = (A_t)_{1 \leq t \leq T}$ be any Hedging algorithm, where $A_t: [0, 1]^{M \times (t-1)} \to \probas_M$ maps past losses $(\bm \ell_{1}, \dots, \bm \ell_{t-1})$ to an element of the probability simplex $\probas_M \subset \R^M$ on $\{ 1, \dots, M \}$.
For any $i^* \in \{ 1, \dots, M \}$, let $\pseudoregret_{T} (i^*, \A)$ denote the pseudo-regret of algorithm $\A$ under the distribution $\P_{i^*}$.
Since $\bm \ell_t$ is independent of $\bm v_t$ under $\P_{i^*}$, we have
\begin{equation}
  \label{eq:lowerbound-gap-pseudoregret}
  \pseudoregret_T (i^*, \A)
  = \sum_{t=1}^T \sum_{i \neq i^*} \E_{i^*} \big[ v_{i,t} (\ell_{i,t} - \ell_{i^*,t}) \big]
  = \Delta \sum_{t=1}^T \sum_{i \neq i^*} \E_{i^*} [v_{i,t}]
  = \Delta \sum_{t=1}^T \E_{i^*} [1 - v_{i^*,t}]
\end{equation}
with $\bm v_t := A_t (\bm\ell_1, \dots, \bm\ell_{t-1})$.
It follows from Equation~\eqref{eq:lowerbound-gap-pseudoregret} that, for every $\A$ and $i^*$, $\pseudoregret_T (i^*, \A)$ increases with $T$.
Hence, without loss of generality we may assume that {$T = \lfloor (\log M) / (16 \Delta^2) \rfloor$}.
The maximum pseudo-regret of $\A$ on the instances $\P_{i^*}$ is lower-bounded as follows:
\begin{equation}
  \label{eq:lowerbound-gap-1}
  \sup_{1 \leq i^* \leq M} \pseudoregret_T (i^*, \A)
  \geq \frac{1}{M} \sum_{1 \leq i^* \leq M} \pseudoregret_T (i^*, \A)
  = \frac{1}{M} \sum_{1 \leq i^* \leq M} \Delta \sum_{t=1}^T \E_{i^*} [ 1 - v_{i^*,t}]
  \, .
\end{equation}

We now ``randomize'' the algorithm $\A$, by replacing it with a randomized algorithm which picks expert $i$ at time $t$ with probability $v_{i,t}$.
Formally, let $\wt P = \uniformdist ([0, 1])^{\otimes T}$ be the distribution of $T$ independent uniform random variables on $[0, 1]$, and denote $\wt \P_{i^*} = \P_{i^*} \otimes \wt P$ for $i^* \in \{ 1, \dots, M \}$.
Furthermore, for every $\bm v \in \probas_M$, let $I_{\bm v} : [0, 1] \to \{ 1, \dots, M \}$ be a measurable map such that $\P (I_{\bm v} (U) = i) = v_i$ for every $i \in \{ 1, \dots, M \}$, where $U \sim \uniformdist ([0 ,1])$.
For every sequence of losses $\bm \ell_1, \dots, \bm \ell_T$ and random variables $U_1, \dots, U_T$ and every $1 \leq t \leq T$, let $I_t = I_{\bm v_t} (U_t)$, where $\bm v_t = A_t (\bm \ell_1, \dots, \bm \ell_t)$.

Denote by $\wt \E_{i^*}$ the expectation with respect to $\wt \P_{i^*}$.
By definition of $I_{\bm v}$, we have $\E_{i^*} [ v_{i^*, t} ] = \wt \E_{i^*} [ \indic{I_t = i^*} ]$ so that, denoting $N_i = \sum_{i=1}^T \indic{I_t = i}$ the number of times expert $i$ is picked,
\begin{equation*}
  \sum_{t=1}^T \E_{i^*} [1 - v_{i^*,t}] = \wt \E_{i^*} [ T - N_{i^*} ]
  \geq \P_{i^*} (N_{i^*} \leq T/2) \cdot \frac{T}{2}
  \, .
\end{equation*}
Hence, letting $A_i \subseteq [0, 1]^{M \times T} \times [0, 1]^T$ be the event $\{ N_i > T/2 \}$,
Equation~\eqref{eq:lowerbound-gap-1} implies that
\begin{equation}
  \label{eq:lowerbound-gap-2}
  \sup_{1 \leq i^* \leq M} \pseudoregret_T (i^*, \A)
  \geq \frac{\Delta T}{2} \times \frac{1}{M} \sum_{1\leq i^* \leq M} \big( 1 - \wt \P_{i^*} (A_{i^*}) \big)
  \, .
\end{equation}

It now remains to upper bound $\frac{1}{M} \sum_{i^*} \wt \P_{i^*} (A_{i^*})$.
To do this, first note that the events $A_{i^*}$, $1 \leq i^* \leq M$, are pairwise disjoint.
Hence, Fano's inequality \citep[see][p.2]{gerchinovitz2017fano} implies that, for every distribution $\wt \Qbb$ on $[0, 1]^{M \times T} \times [0, 1]^T$,
\begin{equation}
  \label{eq:lowerbound-gap-fano}
  \frac{1}{M} \sum_{1\leq i^* \leq M} \wt \P_{i^*} ( A_{i^*})
  \leq \frac{1}{\log M} \bigg\{ \frac{1}{M} \sum_{1 \leq i^* \leq M} \kll{\wt \P_{i^*}}{\wt \Qbb} + \log 2 \bigg\}
\end{equation}
where $\kll{\P}{\Qbb}$ denotes the Kullback-Leibler divergence between $\P$ and $\Qbb$.
Here, we take $\wt \Qbb = \Qbb \otimes \wt P$, where $\Qbb$ is the product of Bernoulli distributions $\bernoullidist (1/2)^{\otimes T}$.
This choice leads to
\begin{equation*}
  \kll{\wt \P_{i^*}}{\wt \Qbb}
  = \kll{\P_{i^*}}{\Qbb}
  = T \cdot \kll{\bernoullidist(1/2 - \Delta)}{\bernoullidist(1/2)}
  \leq 4 T \Delta^2
  \leq \frac{\log M}{4}
  \, ,
\end{equation*}
where the first bound is obtained by comparing KL and $\chi^2$ divergences \citep[Lemma~2.7]{tsybakov2009nonparametric}.
Hence, inequality~\eqref{eq:lowerbound-gap-fano} becomes (recalling that $M \geq 4$)
\begin{equation*}
  \frac{1}{M} \sum_{1\leq i^* \leq M} \wt \P_{i^*} ( A_{i^*})
  \leq \frac{(\log M)/4}{\log M} + \frac{\log 2}{\log M}
  \leq \frac{3}{4}
  \, ;
\end{equation*}
plugging this into~\eqref{eq:lowerbound-gap-2} yields, noting that $T = \lfloor (\log M) / (16 \Delta^2) \rfloor \geq (\log M) / (32 \Delta^2)$ since $(\log M)/(16 \Delta^2) \geq 1$ (as $M \geq 4$ and $\Delta \leq \frac{1}{4}$),
\begin{equation*}
  \sup_{1 \leq i^* \leq M} \pseudoregret_T (i^*, \A)
  \geq \frac{\Delta T}{2} \times \frac{1}{4}
  \geq \frac{\log M}{256 \Delta}
  \, .
\end{equation*}
This concludes the proof.

\subsection{Proof of Theorem~\ref{thm:adv-gap} and Corollary~\ref{cor:hedge-martingale}}
\label{sec:proof-thm-adv-gap}

Let $t_0 $ be the smallest integer $t\geq 1$ such that $M e^{- c_0 \Delta \sqrt{t \log (M) / 8}} \leq \Delta$, namely $t_0 = \left\lceil \frac{8}{c_0^2 \Delta^2} \frac{\log^2 (M / \Delta)}{\log M} \right\rceil$.
  Note that $\sqrt{t_0} \leq \sqrt{1 + \frac{8}{c_0^2 \Delta^2} \frac{\log^2 (M / \Delta)}{\log M}} \leq 1 + \frac{\sqrt{8}}{c_0 \Delta} \frac{\log (M / \Delta)}{\sqrt{\log M}}$.
  Let $t_1 := t_0 \vee \tau_0$.
  For every $T \leq t_1$, the regret bound in the assumption of Theorem~\ref{thm:adv-gap} implies
  \begin{align}
    \label{eq:proof-adv-1}
    R_T
    &\leq c_1 \sqrt{T \log M} \nonumber \\    
    &\leq c_1 \sqrt{\tau_0 \log M} + c_1 \sqrt{t_0 \log M} \nonumber \\    
    &\leq c_1 \sqrt{\tau_0 \log M} + c_1 \sqrt{\log M} + \frac{\sqrt{8} \log(M/\Delta)}{c_0 \Delta}
  \end{align}
  which implies~\eqref{eq:regret-gap-adv} with $c_2 = c_1 + \frac{\sqrt{8}}{c_0}$ and $c_3 = \frac{\sqrt{8}}{c_0}$ (since $1\leq \sqrt{\log M} \leq \frac{\log M}{\Delta}$).
  From now on, assume that $T \geq t_1 + 1$.
  Since $T \geq \tau_0$, we have $R_T = \wh L_T - L_{i^*,T}$, so that
  \begin{equation}
    \label{eq:proof-adv-2}
    R_T
    = \wh L_{t_1} - L_{i^*, t_1} + \sum_{t=t_1 + 1}^T \big( \wh \ell_t - \ell_{i^*, t} \big)
    \, .
  \end{equation}
  In addition, we have for $t\geq t_1 +1$
  \begin{align}    
    \wh \ell_t - \ell_{i^*, t}
    &= \sum_{i \neq i^*} v_{i,t} (\ell_{i,t} - \ell_{i^*, t}) \nonumber \\
    &\leq \sum_{i \neq i^*} v_{i,t} \nonumber \\
    &= \sum_{i\neq i^*} \frac{ e^{- \eta_t (L_{i,t-1} - L_{i^*, t-1})}}{1 + \sum_{j\neq i^*} e^{- \eta_t (L_{j,t-1} - L_{i^*, t-1})}} \nonumber \\
    &\leq \sum_{i\neq i^*} e^{- c_0 \sqrt{(\log M)/t} \times \Delta (t-1)} \label{eq:proof-adv-3} \\
    &\leq M e^{- c_0 \Delta \sqrt{(t-1)(\log M)/2}} \nonumber
    \\
    &\leq \big( M e^{- c_0 \Delta \sqrt{t_0 (\log M)/8}} \big) e^{- c_0 \Delta \sqrt{(t-1)/8}}  \label{eq:proof-adv-5} \\
    &\leq \Delta e^{- c_0 \Delta \sqrt{(t-1)/8}} \label{eq:proof-adv-6}
  \end{align}
  where~\eqref{eq:proof-adv-3} comes from the fact that $\eta_t \geq c_0 \sqrt{(\log M)/t}$ and $L_{i,t-1} - L_{i^*,t-1} \geq \Delta (t-1)$ (since $t-1\geq t_1 \geq \tau_0$),~\eqref{eq:proof-adv-5} from the fact that $t-1 \geq t_0$ and $\log M \geq 1$, and~\eqref{eq:proof-adv-6} from the fact that $M e^{- c_0 \Delta \sqrt{t_0 (\log M)/8}} \leq \Delta$.
  Summing inequality~\eqref{eq:proof-adv-6}, we obtain 
  \begin{align}        
    \sum_{t=t_1+1}^T (\wh \ell_t - \ell_{i^*, t}) \nonumber
    &\leq \sum_{t=t_1+1}^T \Delta e^{- c_0 \Delta \sqrt{(t-1)/8}} \\
    &\leq \Delta \sum_{t\geq 1} e^{- c_0 \Delta \sqrt{t/8}} \nonumber \\
    &\leq \Delta \times \frac{2}{(c_0 \Delta / \sqrt{8})^2} \label{eq:proof-adv-7} \\
    &= \frac{16}{c_0^2 \Delta}
      \label{eq:proof-adv-tail}
  \end{align}
  where~\eqref{eq:proof-adv-7} follows from Lemma~\ref{lem:tail}.
  Combining~\eqref{eq:proof-adv-2},~\eqref{eq:proof-adv-1} and~\eqref{eq:proof-adv-tail} proves Theorem~\ref{thm:adv-gap} with $c_2 = c_1 + \frac{\sqrt{8}}{c_0}$, $c_3 = \frac{\sqrt{8}}{c_0}$ and $c_4 = \frac{16}{c_0^2}$.

  \begin{proof}[Proof of Corollary~\ref{cor:hedge-martingale}]
    Define $\tau = \sup \{ t \geq 0 , \exists i \neq i^* , L_{i,t} - L_{i^*, t} \leq \frac{\Delta t}{2} \}$.
    By Lemma~\ref{lem:crossing-time} below, for every $\eps > 0$ we have, with probability at least $1-\eps$, $\tau \leq {8 ( \log M + \log \eps^{-1})}/{\Delta^2}$.
    By Theorem~\ref{thm:adv-gap}, this implies that, with probability at least $1-\eps$,
    \begin{align*}
      \label{eq:proof-hedge-martingale-1}
      R_T
      &\leq c_1 \sqrt{\tau \log M} + \frac{c_2 \log M + c_3 \log {\Delta}^{-1} + c_4}{\Delta / 2} \\
      &\leq \left( c_1 \sqrt{8} + 2 c_2 \right) \frac{\log M}{\Delta} + c_1 \frac{\sqrt{8\log M \log \eps^{-1}}}{\Delta} + 2 c_3 \frac{\log \Delta^{-1}}{\Delta} + \frac{2 c_4}{\Delta}        
    \end{align*}
    where $c_2, c_3, c_4$ are the constants of Theorem~\ref{thm:adv-gap}.
    The bound~\eqref{eq:regret-martingale-exp} on the pseudo-regret is obtained similarly from Theorem~\ref{thm:adv-gap}, by using the fact that $\pseudoregret_T \leq \E [R_T]$ and
    \begin{equation*}
      \E [ \sqrt{\tau \log M}] \leq \sqrt{\E [\tau] \log M}
      \leq \sqrt{\log M} \sqrt{ 1 + \frac{8 (\log M + 1)}{\Delta^2}}
      \leq \sqrt{\log M} \Big( 1 + \frac{\sqrt{8 \log M} + 1}{\Delta} \Big)
    \end{equation*}
    which is smaller than $(2 + \sqrt{8}) ({\log M})/{\Delta} \leq 5 (\log M)/\Delta$ since $M \geq 3$ and $\Delta \leq 1$.
  \end{proof}

\begin{lemma}
  \label{lem:crossing-time}
  Let $(\ell_{i,t})_{1\leq i \leq M, t\geq 1}$ be as in Theorem~\ref{thm:hedge-stochastic}.
  Denote $\tau = \sup \{ t \geq 0 , \exists i \neq i^* , L_{i,t} - L_{i^*, t} \leq \frac{\Delta t}{2} \}$.
  We have
  \begin{equation}
    \label{eq:crossing-time-exp}
    \E [ \tau ]    
    \leq 1 + \frac{8 (\log M + 1)}{\Delta^2}    
    \, ,    
  \end{equation}
  and for every $\eps \in (0, 1)$,
\begin{equation}
  \label{eq:crossing-time-prob}
  \P \Big( \tau \geq \frac{8 (\log M + \log \eps^{-1})}{\Delta^2} \Big) \leq \eps \, .
\end{equation}    
\end{lemma}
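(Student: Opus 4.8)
The plan is to establish the single exponential bound $\P(\tau \geq t) \leq M e^{-\Delta^2 t / 8}$ for every real $t \geq 0$, from which both assertions follow at once: inequality~\eqref{eq:crossing-time-prob} by taking $t = 8(\log M + \log \eps^{-1})/\Delta^2$, and inequality~\eqref{eq:crossing-time-exp} by summing $\P(\tau \geq t)$ over $t \geq 1$. To this end, write $\tau = \max_{i \neq i^*} \tau_i$ with $\tau_i := \sup\{ s \geq 0 : L_{i,s} - L_{i^*,s} \leq \Delta s / 2 \}$, so that $\P(\tau \geq t) \leq \sum_{i \neq i^*} \P(\tau_i \geq t)$, and fix $i \neq i^*$. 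As in the proof of Theorem~\ref{thm:hedge-stochastic}, set $\Delta_i := \E[\ell_{i,s} - \ell_{i^*,s}] \geq \Delta$ and $Z_{i,s} := \ell_{i^*,s} - \ell_{i,s} + \Delta_i$, so that the $Z_{i,s}$ are \iid, centered, and valued in $[-1 + \Delta_i, 1 + \Delta_i]$. Since $L_{i,s} - L_{i^*,s} = \Delta_i s - \sum_{s'=1}^s Z_{i,s'}$ and $\Delta_i \geq \Delta$, one has $\{\tau_i \geq t\} \subseteq \{ \exists s \geq t : \sum_{s'=1}^s Z_{i,s'} \geq \Delta s / 2 \}$.

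The key step is to control the latter event by an exponential supermartingale. Put $V_s := \sum_{s'=1}^s Z_{i,s'} - \Delta s / 2$ and $X_s := e^{(\Delta/2) V_s}$, so that $X_0 = 1$. Hoeffding's lemma gives $\E[e^{(\Delta/2) Z_{i,s}}] \leq e^{\Delta^2/8}$, hence $\E[X_{s+1} \mid \F_s] = X_s\, e^{-\Delta^2/4}\, \E[e^{(\Delta/2) Z_{i,s+1}}] \leq X_s\, e^{-\Delta^2/8} \leq X_s$: thus $(X_s)_{s \geq 0}$ is a nonnegative supermartingale, and moreover $\E[X_t] = e^{-\Delta^2 t / 4} \prod_{s'=1}^t \E[e^{(\Delta/2) Z_{i,s'}}] \leq e^{-\Delta^2 t / 8}$. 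Applying the maximal inequality for nonnegative supermartingales to $(X_{t+k})_{k \geq 0}$ gives $\P(\exists s \geq t : X_s \geq 1) \leq \E[X_t] \leq e^{-\Delta^2 t / 8}$. Since $\{\tau_i \geq t\} \subseteq \{ \exists s \geq t : V_s \geq 0 \} = \{ \exists s \geq t : X_s \geq 1 \}$, we conclude $\P(\tau_i \geq t) \leq e^{-\Delta^2 t / 8}$, and summing over $i \neq i^*$, $\P(\tau \geq t) \leq M e^{-\Delta^2 t / 8}$. Taking $t = 8(\log M + \log \eps^{-1})/\Delta^2$ yields $\P(\tau \geq t) \leq M e^{-\log M - \log \eps^{-1}} = \eps$, i.e.~\eqref{eq:crossing-time-prob}.

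For the expectation bound, since $\tau$ is a nonnegative integer, $\E[\tau] = \sum_{t \geq 1} \P(\tau \geq t) \leq \sum_{t \geq 1} \min\bigl(1, M e^{-\Delta^2 t / 8}\bigr)$. With $t_\star := 8 (\log M)/\Delta^2$, one has $M e^{-\Delta^2 t / 8} \leq 1$ exactly when $t \geq t_\star$; splitting the sum at $t_\star$ and using $M e^{-\Delta^2 \lceil t_\star \rceil / 8} \leq 1$ to bound the geometric tail gives $\E[\tau] \leq t_\star + (1 - e^{-\Delta^2/8})^{-1}$. Finally $(1 - e^{-\Delta^2/8})^{-1} = 1 + (e^{\Delta^2/8} - 1)^{-1} \leq 1 + 8/\Delta^2$ by the inequality $e^x - 1 \geq x$, so $\E[\tau] \leq \tfrac{8 \log M}{\Delta^2} + 1 + \tfrac{8}{\Delta^2} = 1 + \tfrac{8(\log M + 1)}{\Delta^2}$, which is~\eqref{eq:crossing-time-exp}.

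The only delicate point is the sharp control of $\P(\tau \geq t)$: a plain union bound $\P(\tau \geq t) \leq M \sum_{s \geq t} \P(L_{i,s} - L_{i^*,s} \leq \Delta s/2) \leq M \sum_{s \geq t} e^{-\Delta^2 s / 8}$ loses a multiplicative factor of order $1/\Delta^2$ (the effective number of terms in the geometric series), which introduces a spurious $\log(1/\Delta)$ term and breaks both stated bounds. Introducing the exponential supermartingale is exactly what removes this factor: Doob's maximal inequality upgrades the pointwise Chernoff estimate $\P(\sum_{s'=1}^s Z_{i,s'} \geq \Delta s/2) \leq e^{-\Delta^2 s/8}$ to a bound uniform over all $s \geq t$ at no cost. (A peeling argument over dyadic blocks $[2^k t, 2^{k+1} t)$ is a possible alternative, but yields a worse constant in the exponent.)
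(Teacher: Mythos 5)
Your proof is correct and follows essentially the same route as the paper's: the paper also establishes the uniform tail bound $\P(\tau \geq t_0) \leq M e^{-t_0 \Delta^2/8}$ via an exponential supermartingale and Doob's maximal inequality (packaged there as a self-contained Hoeffding--Azuma maximal inequality, Proposition~\ref{lem:hoeffding-maximal}, applied to the martingale difference sequence $\ell_{i^*,t} - \ell_{i,t} + \Delta_{i,t}$), then union-bounds over $i \neq i^*$ and deduces the expectation bound from the tail bound. The only cosmetic differences are in normalization (you absorb the drift into $X_s$ so that $\E[X_t]$ decays, whereas the paper uses the exact exponential martingale $\exp(\lambda S_t - \lambda^2 t/2)$ and optimizes $\lambda$ afterwards) and in how the expectation bound is extracted (you use the layer-cake sum $\E[\tau] = \sum_{t \geq 1}\P(\tau\geq t)$; the paper integrates the high-probability bound over $\eps$), both of which lead to the same constants.
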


\begin{proof}[Proof of Lemma~\ref{lem:crossing-time}]
  For every $i \neq i^*$ and $t \geq 1$, let $\Delta_{i,t} := \E [ \ell_{i,t} - \ell_{i^*,t} \cond \F_{t-1}]$.
  Using the Hoeffding-Azuma's maximal inequality to the $(\F_t)_{t\geq 1}$-martingale difference sequence $Z_{i,t} = - (L_{i,t} - L_{i^*,t}) + \Delta_{i,t}$ (such that $\Delta_{i,t} - 1 \leq Z_{i,t} \leq \Delta_{i,t} +1$), together with the fact that $\Delta_{i,t} \geq \Delta$, implies that
  \begin{equation}
    \label{eq:proof-crossing-1}
    \P \left( \exists t \geq t_0, L_{i,t} - L_{i^*,t} \leq \frac{\Delta t}{2} \right)
    \leq \P \left( \sup_{t \geq t_0} \frac{1}{t} \left( \sum_{s=1}^t Z_{i,s} \right) \geq \frac{\Delta}{2} \right)
    \leq e^{- t_0 \Delta^2/8}
    \, .
  \end{equation}
  By a union bound, equation~\eqref{eq:proof-crossing-1} implies that
  \begin{equation}
    \label{eq:proof-crossing-2}
    \P \left( \tau \geq t_0 \right)
    \leq M e^{-t_0 \Delta^2/8}
    \, .
  \end{equation}
  Solving for the probability level in~\eqref{eq:proof-crossing-2} yields the high probability bound~\eqref{eq:crossing-time-prob} on $\tau$.
  The bound on $\tau$ in expectation~\eqref{eq:crossing-time-exp} ensues by integrating the high-probability bound over $\eps$.
\end{proof}

We recall Hoeffding-Azuma's maximal inequality for bounded martingale difference sequences \citep{hoeffding1963probability,azuma1967weighted}.
While it follows from a standard argument, we provide a short proof for completeness, since the inequality given in Proposition~\ref{lem:hoeffding-maximal} below differs slightly from the one given in~\citet{hoeffding1963probability}.

\begin{proposition}[Hoeffding-Azuma's maximal inequality]
  \label{lem:hoeffding-maximal}
  Let $(Z_t)_{t \geq 1}$ be a sequence of random variables adapted to a filtration $(\F_t)_{t\geq 1}$.
  Assume that $Z_t$ is a martingale difference sequence: $\E [Z_t \cond \F_{t-1}] = 0$ for any $t \geq 1$, and that $A_t - 1 \leq Z_t \leq A_t+1$ almost surely, where $A_t$ is $\F_{t-1}$-measurable.
  Then, denoting $S_n := \sum_{t=1}^n Z_t$, we have for every $n \geq 1$ and $a \geq 0$:
  \begin{equation}
    \label{eq:hoeffding-maximal}
    \P \left( \sup_{m \geq n} \frac{S_m}{m} \geq a \right)
    \leq e^{- n a^2/2}
    \, .
  \end{equation}
\end{proposition}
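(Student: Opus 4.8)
The plan is to run the classical exponential-supermartingale argument. First I would record a conditional version of Hoeffding's lemma: since $\E[Z_t \cond \F_{t-1}] = 0$ and $Z_t$ lies almost surely in the $\F_{t-1}$-measurable interval $[A_t - 1, A_t + 1]$ of length $2$, applying Hoeffding's lemma conditionally on $\F_{t-1}$ gives, for every $\lambda \in \R$,
\begin{equation*}
  \E \big[ e^{\lambda Z_t} \cond \F_{t-1} \big] \leq e^{\lambda^2 / 2} \, .
\end{equation*}

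Next, fixing the free parameter to be $\lambda = a$ (the case $a = 0$ being trivial, since the right-hand side of~\eqref{eq:hoeffding-maximal} is then $1$), I would set $M_0 = 1$ and $M_m = \exp(a S_m - a^2 m / 2)$ for $m \geq 1$. The display above shows that $(M_m)_{m \geq 0}$ is a nonnegative supermartingale with respect to $(\F_m)_{m \geq 0}$, since $\E[M_m \cond \F_{m-1}] = M_{m-1} e^{-a^2/2} \E[e^{a Z_m} \cond \F_{m-1}] \leq M_{m-1}$. Then I would use the elementary observation that, for $m \geq n$, the inequality $S_m / m \geq a$ forces $a S_m - a^2 m / 2 \geq a^2 m / 2 \geq a^2 n / 2$, hence $M_m \geq e^{n a^2 / 2}$; consequently
\begin{equation*}
  \Big\{ \sup_{m \geq n} \frac{S_m}{m} \geq a \Big\} \subseteq \Big\{ \sup_{m \geq 0} M_m \geq e^{n a^2 / 2} \Big\} \, ,
\end{equation*}
and applying the maximal inequality for nonnegative supermartingales (Ville's inequality), namely $\P(\sup_{m \geq 0} M_m \geq c) \leq \E[M_0]/c = 1/c$, with $c = e^{n a^2/2}$ yields~\eqref{eq:hoeffding-maximal}.

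The only delicate point is the measure-theoretic subtlety that the event $\{\sup_{m \geq n} S_m/m \geq a\}$ is not literally the countable union $\bigcup_{m \geq n}\{S_m/m \geq a\}$, so the displayed inclusion deserves a line of justification. The cleanest fix is to bound $\P(\sup_{m \geq n} S_m/m \geq a) \leq \P\big(\bigcup_{m \geq n}\{S_m/m > a - \eps\}\big) \leq e^{-n(a-\eps)^2/2}$ for every $\eps \in (0, a)$ and let $\eps \to 0$; alternatively one checks directly that along any sequence $m_k \to \infty$ with $S_{m_k}/m_k \to a$ one has $M_{m_k} = \exp\big(m_k (a \cdot S_{m_k}/m_k - a^2/2)\big) \to +\infty$, which makes the inclusion hold pointwise. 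Everything else is routine, so I do not anticipate a genuine obstacle beyond this bookkeeping.
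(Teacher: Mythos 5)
Your proof is correct and follows essentially the same route as the paper's: a conditional Hoeffding bound gives $\E[e^{\lambda Z_t}\cond\F_{t-1}]\leq e^{\lambda^2/2}$, one forms the exponential supermartingale $\exp(\lambda S_m-\lambda^2 m/2)$, and applies the maximal (Ville/Doob) inequality. The paper parametrizes by the failure probability $\eps$ and recovers the same optimal choice $\lambda=a$ at the end (since $\sqrt{2\log(1/\eps)/n}=a$ when $\eps=e^{-na^2/2}$), so the two arguments coincide up to bookkeeping; your direct choice $\lambda=a$ and your attention to the $\sup$-versus-union measurability point, which the paper leaves implicit, are both sound.
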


\begin{proof}
  Fix $\lambda > 0$.
  By Hoeffding's inequality, $\E [e^{\lambda Z_t} \cond \F_{t-1}] \leq e^{\lambda^2/2}$, so that the sequence $M_t^{\lambda} := \exp \big( \lambda S_t - \lambda^2 t / 2 \big)$ is a positive supermartingale.
  Hence, Doob's supermartingale inequality implies that for $\eps \in (0, 1]$:
  \begin{equation}
    \label{eq:proof-hoeffding-maximal-1}
    \P \Big( \sup_{t \geq 1} M_t^{\lambda} \geq \frac{1}{\eps} \Big)
    \leq \frac{\E [M_0^\lambda]}{1/\eps} = \eps
    \, .
  \end{equation}
  Rearranging~\eqref{eq:proof-hoeffding-maximal-1} and letting $\lambda = \sqrt{2 \log (1/\eps) / n}$ yields: with probability $1 - \eps$, for every $t \geq n$,
  \begin{equation}
    \label{eq:proof-hoeffding-maximal-2}
    \frac{S_t}{t}
    \leq \frac{\log \left( {1}/{\eps} \right)}{\lambda t} + \frac{\lambda}{2}
    = \sqrt{\frac{\log (1/\eps)}{2}} \left( \frac{\sqrt{n}}{t} + \frac{1}{\sqrt{t}} \right)
    \leq \sqrt{\frac{2 \log (1/\eps)}{n}}
    \, .
  \end{equation}
  Setting $\eps = e^{- n a^2/2}$ in~\eqref{eq:proof-hoeffding-maximal-2} gives the desired bound.
\end{proof}

\subsection{Proof of Proposition~\ref{prop:lower-bound-hedge-cst}}
\label{sec:proof-lower-bound-hedge-cst}

Note that, since the loss vectors $\bm \ell_t$ are in fact deterministic, $\pseudoregret_T = R_T$.
Denoting $(v_{i,t})_{1 \leq i \leq M}$ the weights selected by the Constant Hedge algorithm at time $t$, and letting $c = c_0 \sqrt{\log M}$, we have
\begin{align}
  R_T
  &= \sum_{t=1}^T \sum_{i=2}^M v_{i, t} (\ell_{i, t} - \ell_{1, t}) \nonumber \\
  &= \sum_{t=1}^T \sum_{i=2}^M \frac{\exp \big( - \frac{c}{\sqrt{T}} (L_{i,t-1}-L_{1,t-1}) \big)}{1 + \sum_{2 \leq i' \leq M} \exp \big( - \frac{c}{\sqrt{T}} (L_{i',t-1}-L_{1,t-1}) \big)} \nonumber \\
  &= \sum_{t=1}^T \frac{ (M-1) \exp \big( - \frac{c}{\sqrt{T}} (t-1) \big)}{1 + (M-1) \exp \big( - \frac{c}{\sqrt{T}} (t-1) \big)}    \label{eq:proof-lower-cst1}
    \, .
\end{align}
Now, let $t_0 \geq 0$ be the largest integer such that $(M-1) \exp ( - \frac{c}{\sqrt{T}} t ) \geq 1/2$, namely
\begin{equation*}
    t_0 = \Big\lfloor \frac{\sqrt{T}}{c} \log (2(M-1)) \Big\rfloor.
\end{equation*}
It follows from Equation~\eqref{eq:proof-lower-cst1} that
\begin{equation}
  \label{eq:proof-lower-cst2}
  R_T
  \geq \sum_{t=1}^{T \wedge (t_0 + 1)} \frac{ (M-1) \exp \big( - \frac{c}{\sqrt{T}} (t-1) \big)}{1 + (M-1) \exp \big( - \frac{c}{\sqrt{T}} (t-1) \big)} 
  \geq \frac{1}{3} \min (T, t_0 + 1)
\end{equation}
where the second inequality comes from the fact that $\frac{x}{1 + x} \geq \frac{1}{3}$ for $x \geq \frac{1}{2}$, 
which we apply to $x = (M - 1) \exp ( - \frac{c}{\sqrt{T}} (t-1)) \geq \frac{1}{2}$ for $t \leq T \wedge (t_0+1) \leq t_0 +1$.
In order to establish inequality~\eqref{eq:lower-bound-hedge-cst}, it remains to note that
\begin{equation*}
  t_0 + 1
  \geq \frac{\sqrt{T}}{c}
  \log \big( 2 (M-1) \big)
  \geq \frac{\sqrt{T \log M}}{c_0}
  \, ,
\end{equation*}
since $2(M-1) \geq M$ and $c = \sqrt{c_0 \log M}$.

Now, consider the Hedge algorithm with doubling trick. Assume that $T \geq 2$, and let $k \geq 1$ such that $T_k \leq T < T_{k+1}$.
Since $R_T = \sum_{t=1}^T \sum_{2\leq i \leq M} v_{i, t} (\ell_{i, t} - \ell_{1, t})$ and each of the terms in the sum is nonnegative, $R_T$ is lower bounded by the cumulative regret on the period $\iint{T_{k-1}}{T_k - 1}$.
During this period of length $T_{k-1}$, the algorithm reduces to the Hedge algorithm with constant learning rate $c_0 \sqrt{\log (M) / T_{k-1}}$, so that the above bound~\eqref{eq:lower-bound-hedge-cst} applies; further bounding $T_{k-1} \geq \frac{T}{4}$ establishes~\eqref{eq:lower-bound-hedge-doubling}.

\subsection{Proof of Proposition~\ref{prop:second-order-bernstein}}
\label{sec:proof-second-order-bernstein}

By convexity of $x \mapsto x^2$ and concavity of $x \mapsto x^{\beta}$,
we have:
\begin{align}
  \label{eq:proof-second-bernstein-1}
  \E [ (\wh \ell_{t} - \ell_{i^*,t})^2 ]
  &\leq \E \bigg[ \sum_{i=1}^M v_{i,t} (\ell_{i,t} - \ell_{i^*,t})^2 \bigg] \\
  &= \E \bigg[ \sum_{i=1}^M v_{i,t} \E \left[ (\ell_{i,t} - \ell_{i^*,t})^2 \cond \F_{t-1} \right] \bigg] \nonumber \\
  &\leq B \E \bigg[ \sum_{i=1}^M v_{i,t} \E \left[ \ell_{i,t} - \ell_{i^*,t} \cond \F_{t-1} \right]^\beta \bigg] \label{eq:proof-second-bernstein-2} \\
  &\leq B \E \bigg[ \sum_{i=1}^M v_{i,t} \E \left[ \ell_{i,t} - \ell_{i^*,t} \cond \F_{t-1} \right] \bigg]^\beta \label{eq:proof-second-bernstein-3} \\
  &= B \E [\wh \ell_t - \ell_{i^*,t}]^\beta
\end{align}
where inequalities~\eqref{eq:proof-second-bernstein-1} and~\eqref{eq:proof-second-bernstein-3} come from Jensen's inequality, and~\eqref{eq:proof-second-bernstein-2} from the Bernstein condition~\eqref{eq:bernstein-condition}.
Taking the expectation of the regret bound~\eqref{eq:second-order-regret}, we obtain
\begin{align}
  \label{eq:proof-second-1}
  \E [R_{i^*,T}]
  &\leq \E \Bigg[ C_1 \sqrt{(\log M) \sum_{t=1}^T (\wh \ell_t - \ell_{i^*,t})^2} + C_2 \log M \Bigg] \nonumber \\
  &\leq C_1 \sqrt{(\log M) \sum_{t=1}^T \E \big[ (\wh \ell_t - \ell_{i^*,t})^2 \big]} + C_2 \log M \\
  &\leq C_1 \sqrt{(\log M) B \sum_{t=1}^T \E \big[ \wh \ell_t - \ell_{i^*,t} \big]^\beta} + C_2 \log M \nonumber \\
  &= C_1 \sqrt{B T \log M} \bigg( \frac{1}{T} \sum_{t=1}^T \E \big[ \wh \ell_t - \ell_{i^*,t} \big]^\beta \bigg)^{1/2} + C_2 \log M \nonumber \\
  &\leq C_1 \sqrt{B T \log M} \left( \frac{\E [R_{i^*,T}]}{T} \right)^{\beta/2} + C_2 \log M
    \label{eq:proof-second-ineq}
\end{align}
where inequalities~\eqref{eq:proof-second-1} and~\eqref{eq:proof-second-ineq} come from Jensen's inequality.
Letting $r = {\E [R_{i^*,T}]}/{T}$ and $u = ({\log M})/{T}$,
inequality~\eqref{eq:proof-second-ineq} writes $r \leq C_1 \sqrt{B u} r^{\beta/2} + C_2 u$.
This implies that (depending on which of these two terms is larger) either $r \leq 2 C_2 u$, or $r \leq 2 C_1 \sqrt{B u} r^{\beta/2}$, and the latter condition amounts to $r \leq (2 C_1)^{2/(2-\beta)} (B u)^{1/(2-\beta)}$.
This entails that
\begin{equation*}
  r \leq (2 C_1)^{\frac{2}{2-\beta}} (B u)^{\frac{1}{2-\beta}} + 2 C_2 u
  \, ,
\end{equation*}
which amounts to
\begin{equation}
  \label{eq:proof-second-final}
  \E [R_{i^*,T}]
  \leq C_3 (B \log M)^{\frac{1}{2-\beta}} T^{\frac{1-\beta}{2-\beta}} + C_4 \log M
\end{equation}
where $C_3 = (2C_1)^{2/(2-\beta)} \leq \max (1, 4C_1^2)$ and $C_4 = 2C_2$.

\subsection{Proof of Theorem~\ref{thm:hedge-no-bernstein}}
\label{sec:proof-lowerbound-no-bernstein}

Consider the constant losses $\ell_{1, t} = 0$, $\ell_{i,t} = \Delta$ where $\Delta = {1} \wedge c_0^{-1} \sqrt{(\log M)/T}$.
These losses satisfy the $(1, 1)$-Bernstein condition since, for every $i>1$,
$\E [ (\ell_{i,t} - \ell_{1,t})^2 ] = \Delta^2 \leq \Delta = \E [\ell_{i,t} - \ell_{1,t}]$.
On the other hand, the regret of the Hedge algorithm with learning rate $\eta_t = c_0 \sqrt{(\log M)/t}$ writes
\begin{align}
  \pseudoregret_T
  &= \sum_{t=1}^T \sum_{i\neq 1} \E [v_{i,t} (\ell_{i,t} - \ell_{1,t})] \nonumber \\
  &= \Delta \sum_{t=1}^T \frac{(M-1) e^{-\eta_t \Delta (t-1)}}{1 + (M-1) e^{-\eta_t \Delta (t-1)}} \nonumber \\
  &\geq \frac{\Delta}{3} \sum_{t=1}^T \bm 1 \Big( (M-1) e^{-\eta_t \Delta (t-1)} \geq \frac{1}{2} \Big) \nonumber \\
  &\geq \frac{\Delta}{3} \sum_{t=1}^T \bm 1 \left( M e^{- c_0 \Delta \sqrt{(t-1) \log M}} \geq 1 \right) \label{eq:proof-hedge-no-bernstein-0} \\
  &\geq \frac{\Delta}{3} \times \min \left( \frac{\log M}{c_0^2 \Delta^2}, T \right) \nonumber \\
  &= \frac{1}{3} \min \Big( \frac{1}{c_0} \sqrt{T \log M}, {T} \Big)
    \, , \label{eq:proof-hedge-no-bernstein-0b}
\end{align}
where~\eqref{eq:proof-hedge-no-bernstein-0} relies on the inequalities $2(M-1) \geq M$ and $({t-1})/{\sqrt{t}} \leq \sqrt{t-1}$ for $M \geq 2, t\geq 1$, while~\eqref{eq:proof-hedge-no-bernstein-0b}
is obtained by noting that ${ (\log M)}/({c_0^2 \Delta^2}) \geq T$ since $\Delta \leq c_0^{-1} \sqrt{(\log M)/T}$ and substituting for $\Delta$.

\subsection{Proof of Theorem~\ref{thm:hedge-characterize-gap}}
\label{sec:proof-hedge}

Assume that the loss vectors $\bm \ell_1, \bm \ell_2, \dots$ are \iid, and denote $i^* = \argmin_{1\leq i \leq M} \E [\ell_{i,t}]$ (which is assumed to be unique), $\Delta = \min_{i \neq i^*} \Delta_i > 0$ where $\Delta_i = \E [\ell_{i,t} - \ell_{i^*,t}]$ and $j \in \{1, \dots, M\}$ such that $\Delta_j = \Delta$.
The Decreasing Hedge algorithm with learning rate $\eta_t = c_0 \sqrt{(\log M)/t}$ satisfies
\begin{align}  
  \pseudoregret_T
  &= \sum_{t=1}^T \sum_{i \neq i^*} \E [v_{i,t}] \Delta_i \nonumber \\
  &\geq \Delta \sum_{t=1}^T \E \left[ \frac{\sum_{i \neq i^*} e^{-\eta_t (L_{i,t-1} - L_{i^*,t-1})}}{1 + \sum_{i \neq i^*} e^{-\eta_t (L_{i,t-1} - L_{i^*,t-1})}} \right] \nonumber \\
  &\geq \Delta \sum_{t=1}^T \E \left[ \frac{e^{-\eta_t (L_{j,t-1} - L_{i^*,t-1})}}{1 + e^{-\eta_t (L_{j,t-1} - L_{i^*,t-1})}} \right] \label{eq:proof-no-bernstein-b1} \\
  &\geq \frac{\Delta}{3} \sum_{t=1}^T \E \left[ \bm 1 \left( e^{-\eta_t (L_{j,t-1} - L_{i^*,t-1})} \geq \frac{1}{2} \right) \right] \nonumber \\
  &= \frac{\Delta}{3} \sum_{t=1}^T \P \left( \eta_t (L_{j,t-1} - L_{i^*,t-1}) \leq \log 2 \right) \label{eq:proof-no-bernstein-b2}
\end{align}
where~\eqref{eq:proof-no-bernstein-b1} relies on the fact that the function $x \mapsto \frac{x}{1+x}$ is increasing on $\R^+$.
Denoting $a = (\log 2)/(c_0 \sqrt{\log M})$, we have for every $1 \leq t \leq 1 + \frac{a^2}{4 \Delta^2}$:
\begin{align}  
  \P \left( \eta_t (L_{j,t-1} - L_{i^*,t-1}) > \log 2 \right)
  &= \P \left( L_{j,t-1} - L_{i^*,t-1} - \Delta (t-1) > a \sqrt{t} - \Delta (t-1) \right) \nonumber \\
  &\leq \P \left( L_{j,t-1} - L_{i^*,t-1} - \Delta (t-1) > \frac{a \sqrt{t-1}}{2} \right) \label{eq:proof-no-bernstein-aDelta} \\
  &\leq e^{- a^2/8} \label{eq:proof-no-bernstein-hoeffding}
\end{align}
where inequality~\eqref{eq:proof-no-bernstein-aDelta} stems from the fact that $\Delta (t-1) \leq \frac{a \sqrt{t-1}}{2}$ (since $t \leq 1 + \frac{a^2}{4 \Delta^2}$), while~\eqref{eq:proof-no-bernstein-hoeffding} is a consequence of Hoeffding's bound applied to the \iid $[-1-\Delta, 1-\Delta]$-valued random variables $\ell_{j,s} - \ell_{i^*,s} - \Delta$, $1\leq s \leq t-1$.
Assuming that $c_0 \geq 1$, we have $a \leq \sqrt{\log 2} \leq 1$, so that by concavity of the function $x \mapsto 1- e^{-x/8}$, $1-e^{-a^2/8} \geq (1-e^{-1/8})a^2$.
Combining this with inequalities~\eqref{eq:proof-no-bernstein-b2} and~\eqref{eq:proof-no-bernstein-hoeffding} and using the fact that $\left\lfloor 1+\frac{a^2}{4 \Delta^2} \right\rfloor \geq \frac{a^2}{4 \Delta^2}$, we obtain for $T \geq \frac{1}{4 \Delta^2} \geq \frac{a^2}{4 \Delta^2}$:
\begin{align}
  \label{eq:proof-no-bernstein-b3}
  \E \left[ R_T \right]
  \geq \frac{\Delta}{3} \min \left( \frac{a^2}{4 \Delta^2}, T \right) (1-e^{-1/8}) a^2 
  = \frac{(1-e^{-1/8}) a^4}{12 \Delta} 
  \geq \frac{1}{450 c_0^4 (\log M)^2 \Delta}
    \, ,
\end{align}
where the last inequality comes from the fact that $(\log 2)^4 (1-e^{-1/8})/12 \geq \frac{1}{450}$.


\end{document}